\documentclass{article}

\usepackage{microtype}
\usepackage{graphicx}
\usepackage{caption}
\usepackage{subcaption}
\usepackage{booktabs} % for professional tables
\usepackage{enumitem}
\usepackage{comment}

\usepackage{hyperref}

\usepackage[preprint]{icml2026}

\usepackage{amsmath}
\usepackage{amssymb}
\usepackage{mathtools}
\usepackage{amsthm}

\usepackage[capitalize,noabbrev]{cleveref}

\theoremstyle{plain}
\newtheorem{theorem}{Theorem}[section]

\newtheorem{lemma}[theorem]{Lemma}
\newtheorem{corollary}[theorem]{Corollary}

\theoremstyle{definition}
\newtheorem{definition}[theorem]{Definition}
\newtheorem{assumption}{Assumption}[section] % <-- CRITICAL CHANGE

\theoremstyle{remark}

\usepackage[textsize=tiny]{todonotes}

\newcommand{\Sspace}{\mathcal{S}}
\newcommand{\Aspace}{\mathcal{A}}
\newcommand{\E}{\mathbb{E}}

\icmltitlerunning{Sample Efficient Active Algorithms for
  Offline Reinforcemnet Learning}

\begin{document}

\twocolumn[
%  \icmltitle{Gaussian Processes for Active Offline Reinforcement Learning with Sample Complexity Guarantees}
\icmltitle{Sample Efficient Active Algorithms for Offline Reinforcement Learning}

  % It is OKAY to include author information, even for blind submissions: the
  % style file will automatically remove it for you unless you've provided
  % the [accepted] option to the icml2026 package.

  % List of affiliations: The first argument should be a (short) identifier you
  % will use later to specify author affiliations Academic affiliations
  % should list Department, University, City, Region, Country Industry
  % affiliations should list Company, City, Region, Country

  % You can specify symbols, otherwise they are numbered in order. Ideally, you
  % should not use this facility. Affiliations will be numbered in order of
  % appearance and this is the preferred way.
  \icmlsetsymbol{equal}{*}

%\begin{comment}
\begin{icmlauthorlist}
    \icmlauthor{Soumyadeep Roy}{equal,yyy}
    \icmlauthor{Shashwat Kushwaha}{equal,yyy}
    \icmlauthor{Ambedkar Dukkipati}{yyy}
%    \icmlauthor{Firstname4 Lastname4}{sch}
%    \icmlauthor{Firstname5 Lastname5}{yyy}
%    \icmlauthor{Firstname6 Lastname6}{sch,yyy,comp}
 %   \icmlauthor{Firstname7 Lastname7}{comp}
 %   \icmlauthor{Firstname8 Lastname8}{sch}
 %   \icmlauthor{Firstname8 Lastname8}{yyy,comp}
\end{icmlauthorlist}
%\end{comment}

  \icmlaffiliation{yyy}{Department of Computer Science and Automation, Indian Institute of Science, Bengaluru, INDIA}
 % \icmlaffiliation{comp}{Company Name, Location, Country}
  %\icmlaffiliation{sch}{School of ZZZ, Institute of WWW, Location, Country}

\icmlcorrespondingauthor{Ambedkar Dukkipati}{ambedkar@iisc.ac.in}
 % \icmlcorrespondingauthor{Firstname2 Lastname2}{first2.last2@www.uk}

  % You may provide any keywords that you find helpful for describing your
  % paper; these are used to populate the "keywords" metadata in the PDF but
  % will not be shown in the document
  \icmlkeywords{Machine Learning, ICML}

  \vskip 0.3in
]

% this must go after the closing bracket ] following \twocolumn[ ...

% This command actually creates the footnote in the first column listing the
% affiliations and the copyright notice. The command takes one argument, which
% is text to display at the start of the footnote. The \icmlEqualContribution
% command is standard text for equal contribution. Remove it (just {}) if you
% do not need this facility.

% Use ONE of the following lines. DO NOT remove the command.
% If you have no special notice, KEEP empty braces:
%\printAffiliationsAndNotice{}  % no special notice (required even if empty)
% Or, if applicable, use the standard equal contribution text:
 \printAffiliationsAndNotice{\icmlEqualContribution}

\begin{abstract}
 Offline reinforcement learning (RL) enables policy learning from static data but often suffers from poor coverage of the state-action space and distributional shift problems. This problem can be addressed by allowing limited online interactions to selectively refine uncertain regions of the learned value function, which is referred to as Active Reinforcement Learning (ActiveRL). While there has been good empirical success, no theoretical analysis is available in the literature. We fill this gap by developing
a rigorous sample-complexity analysis of ActiveRL through the lens of Gaussian Process (GP) uncertainty modeling. In this respect, we propose an algorithm and using GP concentration inequalities and information-gain bounds, we derive high-probability guarantees showing that an $\epsilon$-optimal policy can be learned with
${\mathcal{O}}(1/\epsilon^2)$ active transitions, 
improving upon the $\Omega(1/\epsilon^2(1-\gamma)^4)$ rate of purely offline methods.
Our results reveal that ActiveRL achieves near-optimal information efficiency, that is, guided uncertainty reduction leads to accelerated value-function convergence with minimal online data. Our analysis builds on GP concentration inequalities and information-gain bounds,
bridging Bayesian nonparametric regression and reinforcement learning theories. We conduct several experiments to validate the algorithm and theoretical findings. 
\end{abstract}

\section{Introduction}

Reinforcement learning (RL) has achieved remarkable empirical success in sequential decision-making tasks, 
However, such success typically relies on massive online interactions with the environment. 
In many real-world applications—such as navigation~\citep{2022:IROS:DukkipatiEtAl:LearningSkillsToNavigateWithoutMaster}, autonomous driving~\citep{Carla2025Review}, healthcare~\citep{RLinMedicine2024}, or robotic control~\citep{Tang2025RLinRobotics}—unrestricted exploration is 
expensive, risky, or outright infeasible. These problems can become more difficult in nonstationary environments~\citep{2023:arXiv:AyyagariEgaDukkipati:MDPunderTP}. 
Offline reinforcement learning offers an appealing alternative by learning policies solely from logged data, 
However, its effectiveness depends on the diversity and coverage of the offline dataset. 
When coverage is insufficient, learned policies can generalize poorly outside the dataset support, 
leading to arbitrarily suboptimal or unsafe decisions. 
This trade-off between data efficiency and reliable generalization raises a fundamental question. 
\emph{Can we improve a policy trained offline using only a small number of carefully chosen online interactions?}

%\paragraph{Existing approaches and limitations.}
Recent empirical work on \emph{Active Offline RL} 
has explored precisely this question by allowing a limited online interaction budget
that is selectively allocated to high-uncertainty regions of the state space 
\citep{dukkipati2025active}. 
These approaches demonstrate substantial gains in data efficiency compared to purely offline methods, 
However, their theoretical understanding remains limited. 
Classical sample-complexity results for RL—whether in tabular \citep{wang2023optimalsamplecomplexityreinforcement}, 
linear, or kernelized settings \citep{chowdhury2017kernelized,yeh2023samplecomplexitykernelbasedqlearning}—
do not directly extend to the hybrid offline–online regime. 
Moreover, uncertainty estimation in deep or parametric RL methods is often heuristic, 
lacking the statistical calibration required to connect uncertainty reduction and policy improvement. 
Consequently, the fundamental question of \emph{how much active interaction is sufficient for $\epsilon$-optimality} remains unanswered.

%\paragraph{Goal of this paper.}
This work aims to establish a principled, quantitative theory for Active Offline RL 
by analyzing its sample complexity through the lens of Gaussian
Processes (GP). In this paper, we address the following problem--\emph{How many additional active transitions are sufficient to guarantee an $\epsilon$--optimal policy
when epistemic uncertainty is modeled via a GP prior on the value function?}

%\paragraph{Summary of our results. }
We derive a PAC-style, high-probability bound on the suboptimality gap for GP-based ActiveRL. 
Under standard smoothness, Lipschitz, and coverage assumptions, 
we show that uncertainty-guided active sampling achieves $\epsilon$--optimality with 
${\mathcal{O}}(1/\epsilon^2)$ additional transitions, matching the optimal rate known for GP-based bandit optimization. 
Our proof integrates (i) GP concentration inequalities \citep{Srinivas_2012}, 
(ii) information-gain--based variance reduction  
(iii) Lipschitz contraction of the Bellman operator.

\noindent
\textbf{Contributions.} The contributions of this study are as follows. \\
\textbf{(1)} We model epistemic uncertainty in value estimation using a Gaussian Process prior 
    and analyze their role in guiding active exploration.\\
\textbf{(2)} We prove that ${\mathcal{O}}(1/\epsilon^2)$ active interactions suffice 
    to learn an $\epsilon$-optimal policy under standard smoothness assumptions.\\
\textbf{(3)} Compared to purely offline RL, the active setting achieves a quadratic improvement 
    from $(1-\gamma)^{-4}$ to $(1-\gamma)^{-2}$ scaling.\\
\textbf{(4)} We provide extensive experimental results to validate our theoretical findings and demonstrate utility of our proposed methods. 

%=====================================
\section{Related Work} 
\label{sec:related_work}
Offline RL seeks to learn policies from fixed datasets without further environment interaction 
\citep{levine2020offline, uehara2021pessimistic}. 
Early algorithms introduced conservative or pessimistic regularization to avoid extrapolation outside the dataset support, 
However, these methods often suffer when data coverage is sparse. 
To address this, the emerging line of Active Offline RL 
\citep{dukkipati2025active} proposes hybrid approaches that supplement
offline data with a limited number of actively selected online
interactions. These algorithms empirically demonstrate improved sample
efficiency by targeting high-uncertainty regions or critical decision
boundaries. However, their theoretical understanding remains limited.
existing analyses typically rely on tabular abstractions or heuristic
uncertainty measures. In contrast, our work provides the first
nonparametric, function-space-based analysis of ActiveRL,
establishing PAC-style sample complexity guarantees using Gaussian
Processes. 

Gaussian Processes (GPs) provide a principled Bayesian framework for quantifying epistemic uncertainty. 
They have been successfully applied to value function approximation and model-based RL, 
notably in GP-SARSA \citep{engel2005reinforcement} and related approaches \citep{kuss2003gaussian}. 
GPs have also been used in Bayesian optimization and bandit settings to derive regret and confidence bounds 
\citep{srinivas2009gaussian}. 
However, most prior GP-based RL works focus on either \emph{purely online} or \emph{fully offline} learning, 
and do not address the hybrid setting, where the agent can choose a small number of additional online samples. 
Our analysis extends this theoretical foundation by applying GP concentration inequalities and 
information-gain arguments to the ActiveRL setting, 
Thus, we quantified how uncertainty-guided exploration reduces sample complexity.

Classical results in reinforcement learning establish polynomial sample complexity bounds 
in tabular settings \citep{kearns2002near, azar2017minimax}. 
Subsequent work generalized these results to function approximation using 
linear models, reproducing kernel Hilbert spaces (RKHS), and Gaussian Processes 
\citep{chowdhury2017kernelized}. 
These analyses typically assume full online interaction and access to generative models, 
making them unsuitable for offline or partially interactive regimes. 
Furthermore, while GP-UCB analyses provide tight information-theoretic bounds in supervised or bandit settings 
\citep{srinivas2009gaussian}, their adaptation to RL requires controlling Bellman error propagation and value iteration stability. 
Our work bridges these gaps by combining GP-based information gain with Bellman contraction properties, 
yielding a unified framework that connects GP theory to the sample complexity of hybrid offline--online reinforcement learning.

%\paragraph{Summary:}
In summary, prior research has advanced offline RL algorithms, uncertainty-aware exploration,
and theoretical bounds for GP-based learning. 
However, none of these studies provide a formal sample-complexity analysis for the hybrid \emph{Active Offline RL} setting. 
Our work fills this gap by developing a GP-based theoretical framework that 
(1) models uncertainty explicitly in function space, 
(2) quantifies learning progress through information gain, and 
(3) derives PAC-style guarantees for $\epsilon$--optimality with 
$\tilde{\mathcal{O}}(1/\epsilon^2)$ active samples.
This positions our work as a bridge between empirical ActiveRL methods and
formal statistical foundations of Gaussian Process learning theory.

%================================================================
\section{Active Trajectory Collection with GP}

%---------------------------------------------
%\subsection{Preliminaries}
%\label{sec:preliminaries}

\subsection{Preliminaries} 
Consider a discounted Markov Decision Process (MDP),
$\mathcal{M} = (\Sspace, \Aspace, P, r, \rho, \gamma)$, where $\Sspace$ is a state space, $\Aspace$ is a action space, $P(\cdot \mid s,a)$ is the transition kernel, specifying the next-state distribution given $(s,a)$, $r: \Sspace \times \Aspace \rightarrow [0, R_{\max}]$ is the bounded reward function, $\rho$ is the initial state distribution, and $\gamma \in (0,1)$ is the discount factor.

A policy $\pi: \Sspace \to \Delta(\Aspace)$ defines a distribution over actions given a state S.
For a policy $\pi$, the state–action value function and value function
are defined as
\begin{align}
Q^\pi(s,a)
&\triangleq \mathbb{E}\!\left[
    \sum_{t=0}^{\infty} \gamma^t r(s_t,a_t)
\right], \label{eq:q_value} \\
V^\pi(s)
&\triangleq \mathbb{E}_{a \sim \pi(\cdot \mid s)}
    \bigl[ Q^\pi(s,a) \bigr]. \label{eq:v_value}
\end{align}

The goal of the agent is to find a policy $\pi$ that maximizes the expected discounted return as follows:
\begin{equation}
\label{eq:objective}
J(\pi) = \mathbb{E}_{s_0 \sim \rho}[V^\pi(s_0)],
\qquad
\pi^* = \arg\max_{\pi \in \Pi} J(\pi),
\end{equation}
where $\Pi$ denotes the space of all admissible policies.

%\subsection{Offline and Active Data Regime}
We consider the hybrid offline–online learning setting characteristic of Active Offline RL. The learner is given 
an offline dataset $D_{\mathrm{off}} = \{(s_i, a_i, r_i, s_i')\}_{i=1}^{N}$ collected from an unknown behavior policy $\pi_b$ and a limited active interaction budget $M$, representing the total number of additional transitions that can be queried from the environment.

The agent may use these $M$ online samples strategically to reduce epistemic uncertainty 
in the critical regions of the state space before the final policy optimization.
The full dataset after active querying is $D_{\mathrm{all}} = D_{\mathrm{off}} \cup D_{\mathrm{act}}$,
where  $D_{\mathrm{act}} = \{(s_t, a_t, r_t, s_t')\}_{t=1}^{M}$.

%----------------------------------------------
%\subsection{Gaussian Process Modeling of Value Functions}
%\label{sec:gp_model}
\subsection{GP Modeling of Value Functions} 
We place a Gaussian Process (GP) prior over the optimal value function 
%\begin{equation}
%\label{eq:gp_prior}
$V(.) \sim \mathcal{GP}\big(0,\, k(.,.)\big)$,
%\end{equation}
where $k: \Sspace \times \Sspace \to \mathbb{R}$ is a positive-definite kernel 
that encodes the smoothness and similarity of the structure of the value function.
Intuitively, $k(s,s')$ quantifies how strongly the value at state $s$ is correlated 
with that at $s'$, thereby defining the geometry of generalization across state space.

Given training data $\mathcal{D}_t = \{(s_i, y_i)\}_{i=1}^{N_t}$,
where $N_t$ represents the total number of samples(offline+active) seen up to time $t$, denote 
$\mathbf{y}_t = [y_1, y_2, \ldots, y_{N_t}]^\top$
be the vector of observed target values (which are possibly noisy), 
where each target $y_i$ is generated as
\begin{displaymath}
y_i = V^*(s_i) + \eta_i, \:\:\:  
\eta_i \sim \mathcal{N}(0, \sigma^2),
\end{displaymath}
where $\sigma^2$ represents the observation noise variance. The scalar $\sigma > 0$ quantifies the stochastic uncertainty in each target observation, 
For example, this may be due to environmental stochasticity.

At iteration $t$, the GP is conditioned on all data collected so far $\mathcal{D}_t = \{(s_i, y_i)\}_{i=1}^{N_t}$, where $N_t$ denotes the cumulative number of samples available to the learner.
The offline dataset provides $N$ initial points, and each active interaction contributes one additional sample, such that $N_t = N + t,  0 \le t \le M$.

Hence, $N_t$ grows monotonically from $N_0 = N$ (purely offline phase) to 
$N_M = N + M$ (after exhausting the active exploration budget).
The kernel matrix $\mathbf{K}_t \in \mathbb{R}^{N_t \times N_t}$ 
The posterior $(\mu_t,\sigma_t)$ is therefore defined with respect to the total accumulated data up to round~$t$.

The GP posterior mean and variance at a test state $s$ are given by
\begin{align}
\label{eq:gp_mean}
\mu_t(s) &= \mathbf{k}_t(s)^\top (\mathbf{K}_t + \sigma^2 I)^{-1} \mathbf{y}_t,  \:\:\:\:\mathrm{and}\\
\label{eq:gp_var}
\sigma_t^2(s) &= k(s,s) - \mathbf{k}_t(s)^\top (\mathbf{K}_t + \sigma^2 I)^{-1} \mathbf{k}_t(s),
\end{align}
where $\mathbf{K}_t$ is the kernel (Gram) matrix with entries 
\begin{align}
    [\mathbf{K}_t]_{ij} &= k(s_i, s_j)\:\:\: \mathrm{and}\\   
    \mathbf{k}_t(s) &= [k(s,s_1), \ldots, k(s,s_{N_t})]^\top 
\end{align}    
    is the kernel vector between the test point $s$ and all the training inputs. The posterior mean $\mu_t(s)$ serves as the current estimate of the value function, 
while the posterior standard deviation $\sigma_t(s)$ 
quantified the epistemic uncertainty of this estimate.  
In our Active Offline RL framework, this uncertainty measure $\sigma_t(s)$ 
This guides the selection of new samples for active exploration.

%\subsection{GP Model and the proposed Algorithm}
%\label{sec:gp_interpretation}
\subsection{Proposed Algorithm} 
In this hybrid Active Offline RL framework, we fit a Gaussian Process (GP) model
to approximate the optimal value function $V^*(s)$.
The GP serves as a nonparametric Bayesian regressor that provides both a mean prediction
and a calibrated measure of epistemic uncertainties.
%\paragraph{What is given.}
The GP prior is specified by the kernel $k(.,.)$ and
together with a fixed observation noise variance $\sigma^2$.
The training data
$\mathcal{D}_t=\{(s_i,y_i)\}_{i=1}^{N_t}$—comprising samples from the offline
dataset, and later actively selected transitionsare also provided.

From these inputs, the GP infers a posterior distribution over the value functions.
Its posterior mean
\begin{displaymath}
\mu_t(s)=\mathbf{k}_t(s)^\top(\mathbf{K}_t+\sigma^2I)^{-1}\mathbf{y}_t
\end{displaymath}
is the current estimate of $V^*(s)$,
and the posterior variance
\begin{displaymath}
\sigma_t^2(s)=k(s,s')-\mathbf{k}_t(s)^\top(\mathbf{K}_t+\sigma^2I)^{-1}\mathbf{k}_t(s)
\end{displaymath}
quantifies the remaining uncertainty.

Each time the agent collects new active samples,
the posterior mean and variance are deterministically updated via Bayes' rule,
reducing $\sigma_t(s)$ globally through smoothness encoded in the kernel.
Thus, the GP continually refines its estimate of $V^*(s)$ as information is accrued.
%TO_DD_Check how it is updated

During the offline phase, the GP is trained on the fixed dataset
$D_{\mathrm{off}}$ to initialize $(\mu_0,\sigma_0)$.
In the subsequent active phase, the agent may query previously unseen states
exhibiting high posterior uncertainty.
These new observations expand the coverage of the state space and tighten the GP confidence bands.
Consequently, the active phase precisely targets the regions where the offline data are weak.

In summary, the GP model provides a principled mechanism to combine
given prior structure (kernel and data) with learned posterior predictions,
enabling uncertainty-aware value estimation and selective exploration
in active offline Reinforcement Learning. The procedure is listed in Algorithm~\ref{alg:gp_active_rl}. 

\begin{algorithm}[tb]
\caption{GP-Based Active Offline Reinforcement Learning}
\label{alg:gp_active_rl}
\begin{algorithmic}
    \STATE \textbf{Input:} Offline dataset $D_{\mathrm{off}}$, kernel $k$, noise $\sigma^2$, active budget $M$.
    \STATE \textbf{Initialize:} Fit GP prior $V(s) \sim \mathcal{GP}(0,k)$ using $D_{\mathrm{off}}$.
    \STATE Compute initial posterior mean $\mu_0(s)$ and variance $\sigma_0(s)$.
    \FOR{$t = 1$ to $T$}
        \STATE Select query state $s_t = \arg\max_{s \in \Sspace} \sigma_{t-1}(s)$ 
        \STATE Choose action $a_t \sim \epsilon$-greedy$(\mu_{t-1})$
        \STATE Observe transition $(s_t, a_t, r_t, s_t')$
        \STATE Form target $y_t = r_t + \gamma \mu_{t-1}(s_t') + \eta_t$ 
        \STATE Update GP posterior with $(s_t, y_t)$ to obtain $(\mu_t, \sigma_t)$
        \STATE Update policy $\pi_t$ using augmented dataset $D_t = D_{\mathrm{off}} \cup \{(s_i,y_i)\}_{i=1}^t$
    \ENDFOR
    \STATE \textbf{Output:} Final policy $\pi_T$.
\end{algorithmic}
\end{algorithm}

%Algorithm~\ref{alg:gp_active_rl} serves as a \emph{theoretical abstraction} of the Active Offline RL framework 
%proposed in \citep{dukkipati2025active}
%In the practical implementation, the value function is represented by a neural network (or ensemble) and uncertainty is estimated 
%via ensemble variance. 
%Here, we replace this heuristic uncertainty measure \citep{dukkipati2025active} with the analytically tractable posterior variance of a Gaussian Process (GP) 
%and formalize the active querying rule as 
%$s_t = \arg\max_{s \in \Sspace} \sigma_{t-1}(s)$,
%which captures the same epistemic exploration principle in closed form.
%This GP-based formulation allows us to derive explicit 
%sample-complexity bounds in terms of information gain ($\gamma_T$) and kernel smoothness, 
%while retaining the essential two-phase structure of the empirical algorithm: 
%\emph{offline pretraining} on $D_{\mathrm{off}}$ followed by \emph{active online refinement}.

%=================================
\section{Sample Complexity}
\subsection{Setting}
%\textbf{Setting.} 
Let $J(\pi^*)$ denote the return of the optimal policy and
$J(\pi_T)$ is the learned policy after $T$ active rounds.
The performance gap is defined as
$\Delta_T \triangleq J(\pi^*) - J(\pi_T)$, we want to have $\Delta_T \le \epsilon$.The goal of learning is to achieve $\epsilon$--optimality, i.e.,
$
\Delta_T \le \epsilon
\quad \text{with probability at least } 1 - \delta.
$
The sample complexity of the algorithm is the smallest number of active interactions $M$
(or equivalently, active rounds $T$) required to achieve this.

\subsection{Assumptions}

We consider a discounted Markov Decision Process and make the following 
assumptions,whose interpretations, and technical roles are deferred to
Appendix~\ref{appendix:assumptions}.

\textbf{Assumption A1}
The reward function is uniformly bounded as follows: 
$r(s,a) \in [0, R_{\max}]$ for all $(s,a) \in \Sspace \times \Aspace$.

\textbf{Assumption A2}
The optimal value function $V^*$ belongs to the reproducing kernel Hilbert space
(RKHS) $\mathcal{H}_k$ induced by the kernel $k$, with bounded norm
$\|V^*\|_k \le B$.

\textbf{Assumption A3}
The transition kernel $P(\cdot \mid s,a)$ is $L_p$-Lipschitz in the state–action
pair with respect to total variation distance, i.e.,
\begin{displaymath}
\|P(\cdot \mid s,a) - P(\cdot \mid s',a')\|_1
\le L_p \bigl(\|s-s'\| + \|a-a'\|\bigr),
\end{displaymath}
for all (s,a),(s',a').

\textbf{Assumption A4}
Observed Bellman targets satisfy
$y_t = r_t + \gamma \mu_{t-1}(s_t') + \eta_t$,
where $\eta_t \sim \mathcal{N}(0,\sigma^2)$.

\textbf{Assumption A5}
Let $\sigma_0(s)$ denote the GP posterior standard deviation after conditioning on
offline dataset $D_{\mathrm{off}}$.
We assume the offline coverage radius
$\sigma_{\max} := \max_{s \in \Sspace} \sigma_0(s)$
is finite.

\textbf{Information gain}
Let $f : \Sspace \to \mathbb{R}$ be a latent function and
$\mathcal{D}_A = \{(s_i, y_i)\}_{i=1}^{|A|}$ noisy observations, where
$y_i = f(s_i) + \eta_i$ and $\eta_i \sim \mathcal{N}(0,\sigma^2)$.
The information gain from observing $\mathbf{y}_A$ about $f$ is defined as
$I(\mathbf{y}_A; f)$, and the maximum information gain after $T$ observations is
\begin{displaymath}
\gamma_T = \max_{A \subseteq \Sspace,\, |A| = T} I(\mathbf{y}_A; f).
\end{displaymath}
For a GP prior $V \sim \mathcal{GP}(0,k)$, the mutual information admits the
closed-form expression
$I(\mathbf{y}_A; V)
= \frac{1}{2} \log \big| I + \sigma^{-2} \mathbf{K}_A \big|$
where $\mathbf{K}_A$ is the kernel matrix over $A$.
The quantity $\gamma_T$ depends only on the kernel $k$ and input dimension.

\textbf{Growth of information gain}
For commonly used stationary kernels, the maximum information gain satisfies $\gamma_T = \mathcal{O}\!\left(g_k(T,d)\right)$,
where $g_k(T,d)$ depends on the kernel smoothness and dimension $d$.
In particular, $\gamma_T = \mathcal{O}((\log T)^{d+1})$ for RBF kernels,
$\gamma_T = \mathcal{O}(T^{\frac{d}{2\nu+d}}\log T)$ for Mat\'ern kernels, and
$\gamma_T = \mathcal{O}(d \log T)$ for linear kernels.

\subsection{Main Results} 
We now present our main theoretical result, the sample-complexity bound 
for active offline reinforcement learning under Gaussian Process (GP) uncertainty modeling \Cref{alg:gp_active_rl}.
Throughout, we assume that (A1)--(A5) 

\begin{theorem}[Sample Complexity of  \Cref{alg:gp_active_rl}]
\label{thm:gp_sample_complexity}
Let $\epsilon > 0$ and $\delta \in (0,1)$.  
Suppose the environment satisfies Assumptions (A1)-(A5).
Then, under a GP-based active sampling policy that selects states 
according to maximum posterior uncertainty, the learned policy $\pi_T$ after $T$ active rounds satisfies, with probability at least $1-\delta$,
\begin{equation}
\label{eq:main_bound}
J(\pi^*) - J(\pi_T)
\le
C
\left(
\beta_T + \frac{L}{1-\gamma}
\right)
\sqrt{\frac{\gamma_T}{T}},
\end{equation}
where $C>0$ is a universal constant, 
\begin{displaymath}
\beta_T = B + \sigma\sqrt{2(\gamma_T + \log(1/\delta))},
\end{displaymath}
$\gamma_T$ denotes the maximum information gain after $T$ queries,the constant $L = \gamma(1 + L_p)$ is the induced Lipschitz constant 
of the Bellman operator $\mathcal{T}^\pi$, derived from the 
transition smoothness parameter $L_p$, and $B$ is the RKHS norm bound on the true value function.
\end{theorem}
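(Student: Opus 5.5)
The argument combines a GP confidence bound, the information-gain bound on summed posterior variances, and a Bellman error-propagation step. First, we establish a uniform confidence band for the GP posterior. Under (A2) and (A4), we treat the noise $\eta_t$ as conditionally $\sigma$-sub-Gaussian given the history. We then invoke the self-normalized concentration inequality of \citet{chowdhury2017kernelized} (the RKHS version of \citet{Srinivas_2012}). This gives, with probability at least $1-\delta$ and simultaneously for all $t\le T$ and all $s\in\Sspace$,
\begin{displaymath}
|\mu_{t}(s)-V^*(s)| \le \beta_T\,\sigma_{t}(s) + b_t(s),
\end{displaymath}
where $\beta_T = B+\sigma\sqrt{2(\gamma_T+\log(1/\delta))}$. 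The term $b_t(s)$ is a bias, because the targets $y_t$ are bootstrapped through $\mu_{t-1}(s_t')$ rather than $V^*(s_t')$. We write $y_t = V^*(s_t) + \eta_t + e_t$, where $e_t = r_t+\gamma\mu_{t-1}(s_t') - V^*(s_t)$. The noise-free part $r_t+\gamma V^*(s_t')-V^*(s_t)$ has zero mean along the sampled action's transition, and the remainder is $\gamma(\mu_{t-1}-V^*)(s_t')$.

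Second, we bound the cumulative uncertainty using the max-variance rule. Since $s_t=\arg\max_s\sigma_{t-1}(s)$ and posterior variances are non-increasing in $t$, we have $\max_s\sigma_T(s)\le \sigma_{t-1}(s_t)$ for every $t\le T$. Hence
\begin{displaymath}
\max_s \sigma_T(s) \le \frac{1}{T}\sum_{t=1}^T\sigma_{t-1}(s_t) \le \sqrt{\frac{1}{T}\sum_{t}\sigma_{t-1}^2(s_t)} \le \sqrt{\frac{C_\sigma\gamma_T}{T}}.
\end{displaymath}
The last step is the standard lemma $\sum_t \sigma_{t-1}^2(s_t)\le \frac{2}{\log(1+\sigma^{-2})}\gamma_T$. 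This lemma applies because the offline data only shrink variances (Assumption (A5) ensures that the quantities involved are finite), and because the information gain of the active points conditioned on the offline data is at most $\gamma_T$. This step produces the factor $\sqrt{\gamma_T/T}$, uniformly in $s$ and not merely at the queried points.

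Third, we convert the value error into a policy gap. We take $\pi_T$ to be greedy with respect to the one-step lookahead $r+\gamma P\mu_T$. The standard performance-difference and simulation argument gives
\begin{displaymath}
J(\pi^*)-J(\pi_T)\le \frac{2}{1-\gamma}\,\|\mathcal{T}\mu_T-\mathcal{T}V^*\|_\infty\ \text{plus the direct error}.
\end{displaymath}
We use (A3) with $L=\gamma(1+L_p)$ to control how the sup-norm error at queried states propagates to neighbouring states, and how $\mathcal{T}^\pi$ amplifies it. Combining the three steps then yields
\begin{displaymath}
\Delta_T \lesssim \Bigl(\beta_T+\frac{L}{1-\gamma}\Bigr)\sqrt{\gamma_T/T}.
\end{displaymath}
Here the $\beta_T$ term comes from the confidence width, and the $L/(1-\gamma)$ term comes from Bellman error propagation.

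The main obstacle is the bootstrapping bias $b_t$. Since $e_t$ depends on $\mu_{t-1}-V^*$, we need a recursion of the form
\begin{displaymath}
\|\mu_t-V^*\|_\infty\le \beta_T\max_s\sigma_t(s)+\gamma\,\|\mu_{t-1}-V^*\|_\infty\cdot(\text{GP smoothing gain}).
\end{displaymath}
My first attempt would bound the GP smoothing operator $\mathbf{k}_t(s)^\top(\mathbf{K}_t+\sigma^2I)^{-1}$ by a constant absorbed into $C$. Unrolling the recursion then gives a geometric factor $1/(1-\gamma)$, which multiplies the Lipschitz constant $L$ and yields the $L/(1-\gamma)$ term. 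If the operator norm of this smoothing map cannot be controlled, I would instead assume a contraction-compatible (averager) kernel, or treat the bias as part of the $L/(1-\gamma)$ term by a cruder Lipschitz argument.
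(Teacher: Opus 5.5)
There is a genuine gap, and it is the one you flag yourself: the bootstrapping bias is never controlled, and the proposed fix does not work.

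First, the ``noise-free'' part of $e_t$ is not centered. Its conditional mean is $\mathbb{E}[r_t+\gamma V^*(s_t')\mid s_t,a_t]-V^*(s_t)=Q^*(s_t,a_t)-V^*(s_t)\le 0$. This is strictly negative whenever $a_t$ is suboptimal, which under the $\epsilon$-greedy rule of Algorithm~\ref{alg:gp_active_rl} happens on a constant fraction of rounds in general. So it leaves a bias in $\mu_T$ that does not decay with $T$. The centered remainder is also sub-Gaussian only at scale $\gamma R_{\max}/(1-\gamma)$, not $\sigma$, which puts another $(1-\gamma)^{-1}$ into the confidence width.

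Second, your recursion treats every target as recomputed with $\mu_{t-1}$. In Algorithm~\ref{alg:gp_active_rl}, however, each $y_i$ is frozen when it is formed. The bias of $\mu_t(s)$ is therefore $\mathbf{k}_t(s)^\top(\mathbf{K}_t+\sigma^2 I)^{-1}\mathbf{e}$, with $\mathbf{e}$ containing $\gamma(\mu_{i-1}-V^*)(s_i')$ for every $i\le t$. The recursion thus runs through $\max_{i<t}\|\mu_i-V^*\|_\infty$, which does not contract, and the early, large errors stay in the data.

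Third, even with recomputed targets (fitted value iteration), the gain $\Lambda_t=\sup_s\|(\mathbf{K}_t+\sigma^2 I)^{-1}\mathbf{k}_t(s)\|_1$ is not a universal constant. Kernel ridge regression is not an averager: its weights can be negative, e.g.\ under extrapolation, and the generic bound $\Lambda_t\le\sqrt{N_t\sup_s k(s,s)}/\sigma$ grows with the data. Unrolling needs $\gamma\Lambda_t<1$, and then $1/(1-\gamma\Lambda_t)$ multiplies the whole error, $\beta_T$ included. It can be neither absorbed into $C$ nor attached to $L$ alone. Since the bias identified above does not vanish with $T$, it cannot be folded into a $\frac{L}{1-\gamma}\sqrt{\gamma_T/T}$ term either.

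The paper's proof never confronts this. It applies \Cref{lem:gp_concentration} directly to $\mu_T$, i.e.\ it implicitly treats the targets as $y_i=V^*(s_i)+\eta_i$ (the observation model of the GP-modelling subsection) even while citing (A4). If you adopt that model explicitly, $b_t\equiv 0$ and your first step coincides with the paper's Step~2.

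Your third step also does not deliver the stated form. With $\pi_T$ greedy for $r+\gamma P\mu_T$, the simulation argument gives $\Delta_T\le\frac{2\gamma}{1-\gamma}\|\mu_T-V^*\|_\infty\le\frac{2\gamma}{1-\gamma}\beta_T\max_s\sigma_T(s)$. The confidence width is multiplied by the horizon, and $L_p$ never enters. This is because $\mathcal{T}^\pi$ is already a $\gamma$-contraction and your band is already uniform in $s$, so there is no error ``at queried states'' for (A3) to spread. The inequality $\frac{\gamma\beta_T}{1-\gamma}\le C\bigl(\beta_T+\frac{L}{1-\gamma}\bigr)$ fails for every universal $C$: take $\beta_T\asymp(1-\gamma)^{-1}$, which is the natural size of $B\ge\|V^*\|_\infty$ when $k\le 1$. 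Hence the additive structure of \eqref{eq:main_bound} does not follow from your argument. The paper obtains it only by asserting $|\mu_T(s)-V^{\pi_T}(s)|\le\frac{L}{1-\gamma}\sigma_T(s)$ from \Cref{lem:bellman_lip}.

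Your second step, by contrast, is sound and is precisely what the paper's Step~3 is missing. The paper passes from $\sum_t\sigma_{t-1}^2(s_t)\le C_\sigma\gamma_T$ to $\mathbb{E}_{s\sim d_\rho^{\pi_T}}[\sigma_T(s)]\le\sqrt{C_\sigma\gamma_T/T}$ by citing Jensen. That alone cannot relate variances at the queried points to variances under $d_\rho^{\pi_T}$. Your ingredients do: the max-variance rule, monotonicity of $\sigma_t$, and $I(\mathbf{y}_{\mathrm{act}};V\mid\mathbf{y}_{\mathrm{off}})\le\gamma_T$ give the uniform bound $\max_s\sigma_T(s)\le\sqrt{C_\sigma\gamma_T/T}$ (for $k\le 1$).
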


The primary difficulty lies in rigorously connecting the GP posterior uncertainty 
to the value-function error under Bellman dynamics.
Unlike bandit or supervised settings, where the GP prior directly models the target function, here, the value function is defined implicitly through the Bellman operator, 
which couples the uncertainty across states and actions.
Our analysis overcomes this challenge by introducing a Lipschitz-propagation argument 
which controls the evolution of GP uncertainty under repeated Bellman backups.  
This step is crucial for ensuring that the GP confidence bounds remain valid 
Throughout the policy evaluation process.
%\textbf{Intuition.}  
The bound \eqref{eq:main_bound} states that the suboptimality gap decreases proportionally to 
$\sqrt{\gamma_T/T}$, 
where $\gamma_T$ captures the speed at which the GP posterior contracts as informative samples are acquired.  
This links epistemic uncertainty reduction directly to policy improvement under limited online exploration.
In an active offline RL setup, if new online samples are selected in the most uncertain regions (according to the GP’s posterior), the policy improves at a rate that depends on the rate at which the uncertainty decreases, which is quantified by the GP’s information gain.

%\paragraph{Comparison to Prior Work.}
Classical tabular RL results \citep{kearns2002near, azar2017minimax} 
achieve $\tilde{\mathcal{O}}(1/\epsilon^2)$ dependence on $\epsilon$ 
However, it scales poorly with the effective horizon as $(1-\gamma)^{-4}$.  
Kernelized and GP-based online RL analyses 
\citep{chowdhury2017kernelized} 
recover a similar $\tilde{\mathcal{O}}(1/\epsilon^2)$ dependence but assumes full online interaction.  
In contrast, our result shows that the \emph{active offline} setting 
achieves the same optimal $\epsilon$-dependence with 
a reduced horizon factor of $(1-\gamma)^{-2}$, which is a quadratic improvement over purely offline RL bounds \citep{uehara2021pessimistic}.
This improvement arises because GP-guided active querying allocates 
online samples to uncertainty-dominated regions, achieving a more rapid contraction of posterior variance.

Now we present some corollaries of the main theorem \Cref{thm:gp_sample_complexity}.
\begin{corollary}[Squared-Exponential Kernel]
\label{cor:se_kernel}
For the squared-exponential kernel in $\mathbb{R}^d$, the information gain satisfies
$\gamma_T = \mathcal{O}((\log T)^{d+1})$.
Consequently, under the conditions of \Cref{thm:gp_sample_complexity},
$\epsilon$-optimality is achieved with
\begin{displaymath}
T = \tilde{\mathcal{O}}\!\left(\frac{(\log T)^{d+1}}{\epsilon^2}\right)
\quad \text{and} \quad
M = \tilde{\mathcal{O}}\!\left(\frac{1}{\epsilon^2(1-\gamma)^2}\right).
\end{displaymath}
\end{corollary}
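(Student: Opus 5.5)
The corollary follows from \Cref{thm:gp_sample_complexity} by substituting the squared-exponential information-gain rate and inverting the bound for $T$.

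\textbf{Step 1: substitute the information-gain rate.} From the growth statement for RBF kernels, $\gamma_T \le c_d (\log T)^{d+1}$. Inserting this into $\beta_T = B + \sigma\sqrt{2(\gamma_T+\log(1/\delta))}$ gives
\[
\beta_T = \tilde{\mathcal{O}}\bigl(B + \sigma\sqrt{\log(1/\delta)}\bigr),
\]
since the dependence on $\gamma_T$ is polylogarithmic in $T$. The right-hand side of \eqref{eq:main_bound} then becomes
\[
C\left(\beta_T + \frac{L}{1-\gamma}\right)\sqrt{\frac{c_d(\log T)^{d+1}}{T}}.
\]

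\textbf{Step 2: invert for $T$.} Require this quantity to be at most $\epsilon$ and square both sides. This gives the sufficient condition
\[
T \;\ge\; \frac{C^2 c_d\,(\log T)^{d+1}}{\epsilon^2}\left(\beta_T + \frac{L}{1-\gamma}\right)^2 .
\]
This is the first displayed bound, stated in the implicit form the corollary uses, with $\tilde{\mathcal{O}}$ absorbing constants.

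\textbf{Step 3: obtain the explicit budget $M$.} Each round consumes one active transition, so $M = T$. Bound the prefactor using $(a+b)^2 \le 2a^2 + 2b^2$. The term $L/(1-\gamma)$ contributes $L^2/(1-\gamma)^2$. The $\beta_T^2$ term is polylogarithmic and is absorbed into $\tilde{\mathcal{O}}$, since $(1-\gamma)^{-2} \ge 1$. The result is $M = \tilde{\mathcal{O}}\bigl(1/(\epsilon^2(1-\gamma)^2)\bigr)$ for fixed $d$, $B$, $\sigma$, $L_p$ and $\delta$.

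\textbf{Main obstacle: the implicit inequality $T \ge A(\log T)^{d+1}$.} Here
\[
A = \frac{C^2 c_d\,(\beta_T + L/(1-\gamma))^2}{\epsilon^2},
\]
and $A$ itself depends on $T$ through $\beta_T$. I would handle this with the standard fixed-point lemma: $T = 2A\bigl(c\log(A)\bigr)^{d+1}$ suffices for a suitable constant $c$ depending on $d$. To verify it, substitute this $T$ and use $\log T \le \log(2A) + (d+1)\log(c\log A)$, which is $\mathcal{O}(\log A)$. To break the circularity, replace $\beta_T$ by its value at the candidate $T$; this only changes logarithmic factors, so the argument closes.

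The $\tilde{\mathcal{O}}$ in the final bound then hides $\mathrm{polylog}\bigl(1/(\epsilon(1-\gamma))\bigr)^{d+1}$ together with the $\log(1/\delta)$ term, which gives the stated $M$.
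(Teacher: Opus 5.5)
Your argument is correct, but it converts active rounds into active transitions differently from the paper. The paper rearranges \eqref{eq:main_bound} to $T \ge \tilde{\mathcal{O}}\bigl((\beta_T + \tfrac{L}{1-\gamma})^2\gamma_T/\epsilon^2\bigr)$ and substitutes $\gamma_T = \mathcal{O}((\log T)^{d+1})$. It then sets $M = T\cdot\bar{H}$, with $\bar{H}\le(1-\gamma)^{-1}$ transitions per round.

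You instead take $M = T$. This matches Algorithm~\ref{alg:gp_active_rl}, where each round observes one transition, and the paper's own setup ($N_t = N+t$, and ``active interactions $M$ (or equivalently, active rounds $T$)''). You then isolate the horizon factor with $(a+b)^2\le 2a^2+2b^2$.

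Your choice is the one that actually delivers the stated exponent. $T$ already scales as $(1-\gamma)^{-2}$ through the $(L/(1-\gamma))^2$ factor. Multiplying by $\bar{H}$, as the paper does, would therefore give only an $\tilde{\mathcal{O}}\bigl(\epsilon^{-2}(1-\gamma)^{-3}\bigr)$ bound. The paper's per-round-rollout accounting is closer to the experiments, which collect full trajectories, but it does not support the claimed $(1-\gamma)^{-2}$. Your route also handles explicitly the self-referential inequality $T \ge A(\log T)^{d+1}$ with $T$-dependent $\beta_T$. The paper passes over this by simply substituting $\gamma_T$.

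One minor imprecision: $\beta_T^2\gamma_T$ itself grows like $(\log T)^{2(d+1)}$, so the hidden polylogarithmic factor can have exponent up to $2(d+1)$, not $d+1$. A cleaner way to break the circularity is to bound $\beta_T^2 \le 2B^2 + 4\sigma^2(\gamma_T + \log(1/\delta))$. For $T\ge 3$, this turns your sufficient condition into $T \ge A_0(\log T)^{2(d+1)}$, where $A_0 = \mathcal{O}\bigl(\epsilon^{-2}(1-\gamma)^{-2}\bigr)$ does not depend on $T$. Your fixed-point lemma then applies verbatim with that exponent. This changes nothing in the final $\tilde{\mathcal{O}}$ statement.
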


\begin{corollary}[Comparison with Purely Offline RL]
\label{cor:offline_compare}
Purely offline RL algorithms require
\begin{displaymath}
N_{\mathrm{offline}} = 
\Omega\!\left(\frac{1}{\epsilon^2(1-\gamma)^4}\right)
\end{displaymath}
samples to achieve $\epsilon$-optimality \citep{uehara2021pessimistic}.
In contrast, active offline RL under GP uncertainty achieves the same accuracy with
\begin{displaymath}
N_{\mathrm{offline}} + M =
\tilde{\mathcal{O}}\!\left(\frac{1}{\epsilon^2(1-\gamma)^2}\right).
\end{displaymath}
This represents a quadratic improvement in horizon dependence owing to the targeted active exploration.
\end{corollary}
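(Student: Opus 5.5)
The plan is to read the corollary as a direct consequence of \Cref{thm:gp_sample_complexity}: invert the bound \eqref{eq:main_bound} in $T$, and then set the result beside the cited offline lower bound. I will not re-derive \eqref{eq:main_bound}; I take it as given, with probability at least $1-\delta$.

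\textbf{Step 1: absorb $\beta_T$.} Use $(a+b)^2\le 2a^2+2b^2$ to write
\begin{displaymath}
\Bigl(\beta_T+\tfrac{L}{1-\gamma}\Bigr)^2 \le 2\beta_T^2 + \tfrac{2L^2}{(1-\gamma)^2}.
\end{displaymath}
Since $\beta_T^2 = \mathcal{O}(B^2+\sigma^2(\gamma_T+\log(1/\delta)))$ grows only polylogarithmically under the information-gain growth rates already stated, and $L=\gamma(1+L_p)$ is a constant, the horizon-dependent term dominates. This gives
\begin{displaymath}
\Bigl(\beta_T+\tfrac{L}{1-\gamma}\Bigr)^2 = \tilde{\mathcal{O}}\bigl((1-\gamma)^{-2}\bigr),
\end{displaymath}
where $\tilde{\mathcal{O}}$ hides $B$, $\sigma$, $L_p$, $\log(1/\delta)$ and polylogarithms in $T$.

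\textbf{Step 2: invert the bound.} Squaring, it suffices to have
\begin{displaymath}
C^2\Bigl(\beta_T+\tfrac{L}{1-\gamma}\Bigr)^2\,\gamma_T\le \epsilon^2 T,
\end{displaymath}
which guarantees $J(\pi^*)-J(\pi_T)\le\epsilon$. Because $\gamma_T$ is sublinear (polylogarithmic for the kernels singled out in \Cref{cor:se_kernel}), this is an implicit inequality of the form $T \ge c\,\mathrm{polylog}(T)/(\epsilon^2(1-\gamma)^2)$. I would resolve it with the standard lemma: if $T\ge 2c\log^k(2c)$-type then $T\ge c\log^k T$, whose proof is a routine monotonicity check. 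The resolved condition is
\begin{displaymath}
T=\tilde{\mathcal{O}}\bigl(1/(\epsilon^2(1-\gamma)^2)\bigr).
\end{displaymath}
Each round of Algorithm~\ref{alg:gp_active_rl} consumes one transition, so $M=T$.

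\textbf{Step 3: compare with the offline rate.} Offline data enter the theorem only through the finiteness of $\sigma_{\max}$ in (A5), with no required size. So $N_{\mathrm{offline}}$ can be any fixed amount that makes $\sigma_{\max}$ finite and is not increased with $\epsilon$; under that reading the total $N_{\mathrm{offline}}+M$ is governed by $M$ up to an additive constant. Comparing with $\Omega(1/(\epsilon^2(1-\gamma)^4))$ from \citep{uehara2021pessimistic}, the ratio is $(1-\gamma)^{-2}$, which is the claimed quadratic improvement.

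\textbf{Main obstacle.} The delicate step is Step 3's treatment of $N_{\mathrm{offline}}$. The statement reuses the symbol for the offline-only requirement, so I would make explicit that in the active regime it denotes the (fixed, $\epsilon$-independent) offline set size. I would also note that the hidden constants and logarithms, including the $(\log T)^{d+1}$ and $\beta_T$ factors, do not depend on $(1-\gamma)$. That independence is what makes the horizon comparison legitimate. If it fails, meaning $B$ scales like $R_{\max}/(1-\gamma)$ because $\|V^*\|_k$ does, then $\beta_T$ would contribute its own $(1-\gamma)^{-1}$. I would handle this by stating $B$ as a $(1-\gamma)$-free constant, as (A2) implicitly does.
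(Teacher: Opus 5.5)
Your argument is correct, given \Cref{thm:gp_sample_complexity} and the cited offline bound. It is the same inversion the paper performs in Step~4 of its proof of the main theorem, except at the one step that fixes the horizon exponent. The paper converts rounds into transitions via $M=T\cdot\bar H$ with $\bar H\le(1-\gamma)^{-1}$. Its $T$ already carries the factor $(\beta_T+\tfrac{L}{1-\gamma})^2=\tilde{\mathcal{O}}((1-\gamma)^{-2})$, so that route, carried through, gives $M=\tilde{\mathcal{O}}(1/(\epsilon^2(1-\gamma)^3))$, and the asserted $(1-\gamma)^{-2}$ does not follow from it.

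Your accounting $M=T$ matches Algorithm~\ref{alg:gp_active_rl} and the convention $N_t=N+t$, and it is what actually produces the stated exponent. Its price is the one-transition-per-round model, in which $s_t=\arg\max_s\sigma_{t-1}(s)$ is queried directly (generative-model-style access) rather than reached by rollouts. Rollouts are presumably what $\bar H$ was meant to capture.

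You also make explicit two things the paper leaves implicit. First, $N_{\mathrm{offline}}$ in the second display must be an $\epsilon$-independent offline set size; read as the offline requirement, $N_{\mathrm{offline}}+M\ge N_{\mathrm{offline}}$ would contradict the claim. Second, the inversion needs polylogarithmic $\gamma_T$. Your Step~1 appeal to ``the growth rates already stated'' overreaches here. For Mat\'ern kernels $\beta_T\sqrt{\gamma_T/T}\asymp\gamma_T/\sqrt{T}$ for large $T$, so the rate is worse than $\epsilon^{-2}$ (and vacuous when $d\ge 2\nu$). The restriction to the kernels of \Cref{cor:se_kernel} should therefore be a stated hypothesis.

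One correction to your ``main obstacle'' paragraph: you do not need $B$ to be $(1-\gamma)$-free, and assuming it weakens the comparison. Under the normalization $\sup_s k(s,s)\le 1$ implicit in the paper's proof of \Cref{lem:var_sum}, the reproducing property gives $\|V^*\|_\infty\le\|V^*\|_k\le B$. Any instance with optimal values of order $R_{\max}/(1-\gamma)$ therefore forces $B\gtrsim R_{\max}/(1-\gamma)$. Keeping $B$ fixed as $\gamma\to1$ restricts you to low-value MDPs, unlike those behind the offline lower bound. But $B$ enters additively next to $L/(1-\gamma)$. Allowing $B=\mathcal{O}(R_{\max}/(1-\gamma))$ still gives $(\beta_T+\tfrac{L}{1-\gamma})^2=\tilde{\mathcal{O}}((1-\gamma)^{-2})$, so your conclusion survives without that assumption.
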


The bound in \Cref{thm:gp_sample_complexity} arises from three interacting principles.
(i) \textit{Smoothness:} The membership of the value function in an RKHS induces GP concentration bounds of width $\beta_T \sigma_T(s)$.
(ii) \textit{Information gain:} The posterior variance decays as $\mathcal{O}(\sqrt{\gamma_T/T})$ 
and (iii) \textit{Bellman contraction:} Lipschitz continuity ensures that the uncertainty in $V$ propagates geometrically through the Bellman operator.  

Together, these yield a tight coupling between epistemic uncertainty reduction and policy improvement,
providing the first nonparametric sample-complexity characterization of Active Offline RL.
The resulting $\tilde{\mathcal{O}}(1/\epsilon^2)$ rate matches the known optimal scaling for GP bandits \citep{srinivas2009gaussian} and kernelized RL, 
while simultaneously incorporating hybrid offline–online learning dynamics to the model.

%=============================================
\section{Experiments}
\label{sec:experiments}

\subsection{Environments \& Datasets} 
We evaluate on continuous-control benchmarks from D4RL, covering both sparse-reward navigation and dense-reward locomotion. Navigation experiments are conducted on \texttt{Maze2D}, and \texttt{AntMaze} environments, which require long-horizon planning under sparse rewards. Locomotion experiments are performed on the \texttt{HalfCheetah}, \texttt{Hopper}, and \texttt{Walker2d} environments, which test the scalability to high-dimensional state spaces and complex dynamics.

For all environments, we use the standard D4RL offline datasets consisting of transitions
$(s_t, a_t, r_t, s_{t+1}, d_t)$. The performance is reported using D4RL normalized scores averaged over 20 evaluation episodes.

\subsection{Offline Dataset Pruning} 
To study the robustness under limited or biased data coverage, we applied controlled dataset pruning. For navigation tasks, we perform region-based pruning by removing or truncating trajectories that enter predefined regions of the state space. Truncated episodes are treated as timeouts rather than terminal transitions to preserve the validity of Bellman backups. For all environments, we additionally apply random episode subsampling, retaining only a fraction of the original dataset.

These pruning strategies help to simulate realistic offline data limitations while preserving the trajectory structure required for stable value propagation. Full pruning details and region definitions are provided in Appendix \ref{appendix:exp}.

\subsection{Value Function Modeling with Sparse Gaussian Processes} 
We model the state-value function $V(s)$ using a Gaussian process (GP), which is used for uncertainty-aware state selection and Bellman target estimation and is never used as a control policy.

Exact GP inference scales cubically with the number of data points and is infeasible for large offline datasets. Therefore, we adopted a Sparse Variational Gaussian Process (SVGP), which approximates the full GP using a fixed set of inducing points. This yields tractable inferences while preserving calibrated epistemic uncertainty, which is critical for guiding active data collection.

Formally, we place a Gaussian process prior over the value function $V(s)$ with a zero mean and a stationary kernel. We consider kernels from the squared exponential (RBF) and Matérn families, and use the Matérn kernel with smoothness parameter $\nu = 2.5$ by default, as it provides a better inductive bias for non-smooth value landscapes commonly encountered in reinforcement learning. The GP posterior is approximated via a variational distribution over inducing variables, with inducing locations and kernel hyperparameters learned jointly.

All Gaussian Process models were implemented using the GPyTorch library. Architectural and optimization details are deferred to Appendix \ref{appendix:exp}

\subsection{Fitted Value Iteration Initialization} 
The GP is initialized using fitted value iteration (FVI) on an offline dataset. Starting from targets $y^{(0)} = r$, we iteratively apply Bellman updates
\begin{displaymath}
y^{(k+1)} = r + \gamma (1 - d)\, \mu^{(k)}(s'),
\end{displaymath}
where $\mu^{(k)}(s')$ denotes the GP posterior mean. This procedure propagates sparse rewards across the state space before active data collection and is repeated periodically as new data are acquired.

\subsection{Control Policy Learning} 
Control policies are trained independently of GP using standard offline RL algorithms. We used TD3+BC for the Maze2D and locomotion tasks and IQL for the AntMaze environments. The policy serves as a stabilizing behavioral prior and is retrained periodically using a mixture of offline and active trajectories. Active data is collected as full environment rollouts rather than isolated transitions. The training schedules and architectures are detailed in Appendix \ref{appendix:exp}.

\subsection{Active Dataset Collection} 
Active data collection proceeds from the standard environment resets. At each step, the agent either follows the control policy or performs an uncertainty-guided exploration. For uncertainty-guided exploration, we sample multiple candidate actions around the policy action and perform one-step lookahead rollouts. The action that maximizes the GP posterior standard deviation of the predicted next-state value is executed. This action-conditioned criterion directs exploration toward regions of high epistemic uncertainty while remaining dynamically feasible.

The training alternates between small and large retraining phases. Small updates perform lightweight policy retraining using the most recent active data. Large updates retrain both the policy and GP using all accumulated data. This two-timescale schedule balances the computational cost and stability.

\subsection{Baselines \& Evaluation Metrics} 
We compared our results with the following baselines. These include (i) Behavior Cloning (BC), where the policy is trained via supervised learning on the offline dataset; (ii) offline RL (Offline), where the policy is trained only on the offline dataset; and (iii) random exploration (Random), where active data are collected using uniformly random actions.

\begin{table}[H]
\centering
\caption{Performance and sample efficiency comparison across environments.
The returns are normalized D4RL scores. The reduction (\%) is calculated for all samples in the offline dataset.
}
\label{tab:combined_results}

\resizebox{\linewidth}{!}{
\begin{tabular}{l c c c c c}
\hline
\textbf{Environment} &
\textbf{BC} &
\textbf{Offline } &
\textbf{Random} &
\textbf{Ours (GP)} &
\textbf{Reduction (\%)} \\
\hline
halfcheetah-random-v2& 2.3 & 13.5 & 38.2 & \textbf{39.1} & 41.3\\
hopper-random-v2 & 4.2 & 8.2 & 21.5 & \textbf{22.6} & 30.1\\
walker2d-random-v2 &  2.0&  7.9& 10.7 &  \textbf{10.8}& 34.4\\
halfcheetah-medium-v2 & 42.8 & 48.3 & 54.3 & \textbf{57.7}& 30.1\\
hopper-medium-v2 & 54.0 & 68.1 & 83.1 & \textbf{85.1} & 32.4\\
walker2d-medium-v2 & 73.1 & 83.6 & 84.9 & \textbf{85.2} & 42.2\\
\hline
maze2d-umaze-hard-v1 & -8.1 & -11.5 & 100.3 & \textbf{152.12} & 78.1 \\
maze2d-medium-easy-v1 & -4.5 & -4.2 & 59.6 &  \textbf{156.71}& 75.4 \\
maze2d-medium-hard-v1 & -4.4 & -3.7 & 41.8 &  \textbf{150.71}& 61.9 \\
maze2d-medium-dense-hard-v1 & -6.1 & -12.3 & 78.3 &  \textbf{119.2}& 89.2 \\
maze2d-large-easy-v1 & 1.7 & -3.6 & 15.6 & \textbf{120.5} & 54.0\\
maze2d-large-hard-v1 & -2.3 & -2.0 & 5.3 & \textbf{109.1} & 40.4\\
\hline
antmaze-umaze-v0 & 62.0 & 74.7 & 79.9 & \textbf{83.5} & 40.1\\
antmaze-umaze-diverse-v0 & 54.0 & 56.0 & 38.6 &  \textbf{80.6}& 51.6 \\
antmaze-medium-play-v0 & 0.0 & 58.1 & 51.3 & \textbf{71.8} & 40.3\\
antmaze-medium-diverse-v0 & 1.3 & 62.1 & 61.2 & \textbf{69.5} & 30.4\\

\hline
\end{tabular}
}
\end{table}

All the baselines use identical network architectures, training budgets, and dataset mixing procedures. The performance is evaluated using D4RL normalized scores and learning curves as a function of the active environment steps. For navigation tasks, we additionally visualize value and uncertainty heatmaps. The quantitative results are summarized in Table \ref{tab:combined_results}, with further analyses in Appendix \ref{appendix:exp}. 
\subsection{Implementation Summary} To ensure reproducibility, we highlight key hyperparameters here; full details are provided in Appendix~\ref{appendix:exp}.
\textbf{Architectures:} The control policies (Actor/Critic) are parameterized as 3-layer MLPs with 256 hidden units and ReLU activations. The SVGP inducing points are initialized via K-Means clustering on the offline dataset to ensure broad state-space coverage.
\textbf{Active Loop:} We employ an $\epsilon$-greedy exploration strategy with $\epsilon=0.2$. During uncertainty-guided steps, we sample $N=32$ candidate actions and select the one maximizing the predicted next-state variance.
To balance stability and speed, we perform lightweight policy updates every $10^3$ environment steps, while full GP retraining (via FVI) and policy consolidation occur every $5\times 10^3$ steps.
\vspace{-0.5em}

\begin{figure*}[t]
    \centering
    \includegraphics[
        width=0.7\linewidth
    ]{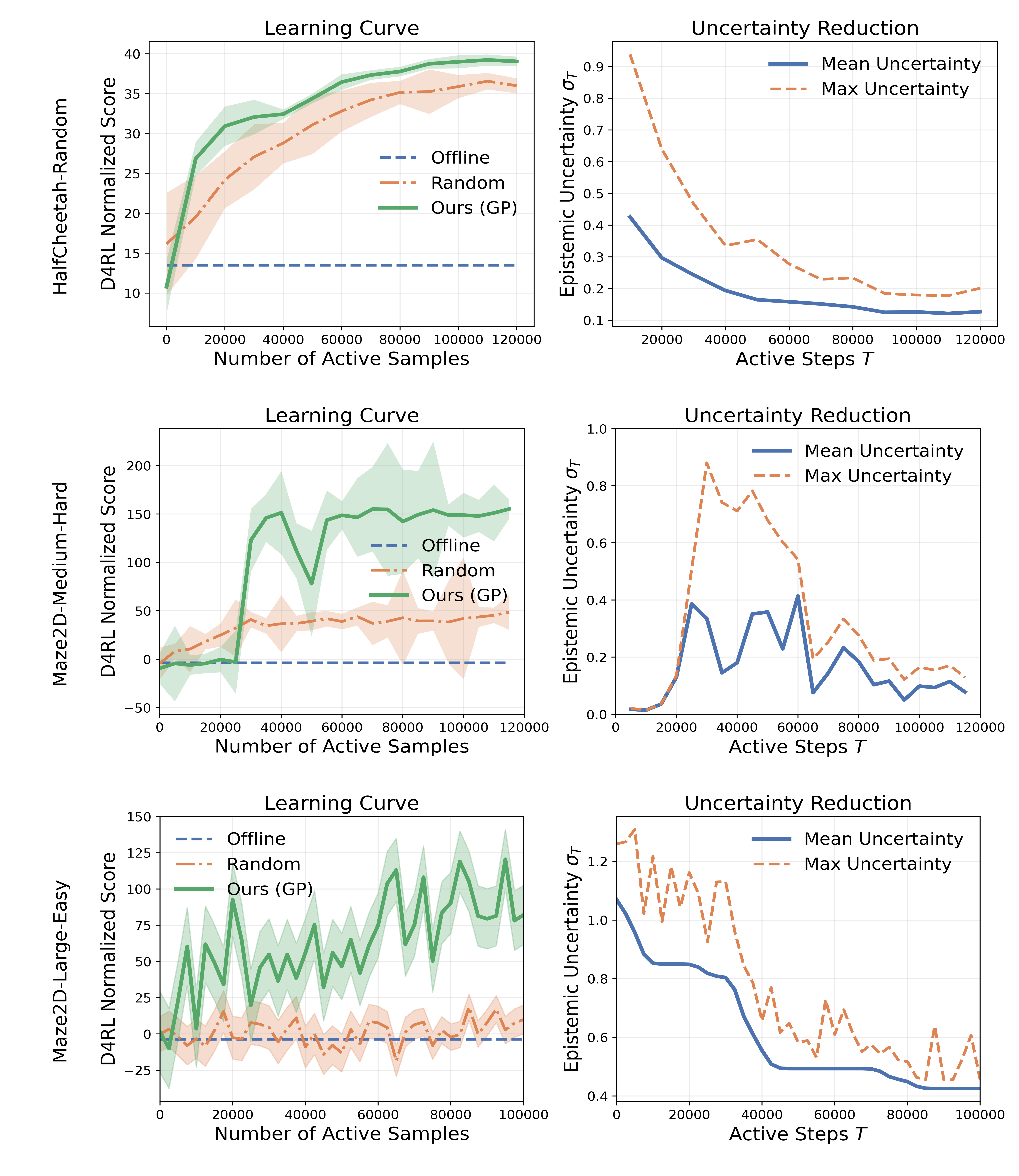}
    \caption{
    Learning performance and uncertainty reduction across different environments.
    \textbf{Top}: HalfCheetah-Random.
    \textbf{Middle}: Maze2D-Medium-Hard.
    \textbf{Bottom}: Maze2D-Large-Easy.
    }
    \label{fig:learning_uncertainty_combined}
\end{figure*}

%================================================================
\section{Results \& Discussion}
\begin{figure*}[t]
    \centering
    \includegraphics[
        width=0.7\linewidth
    ]{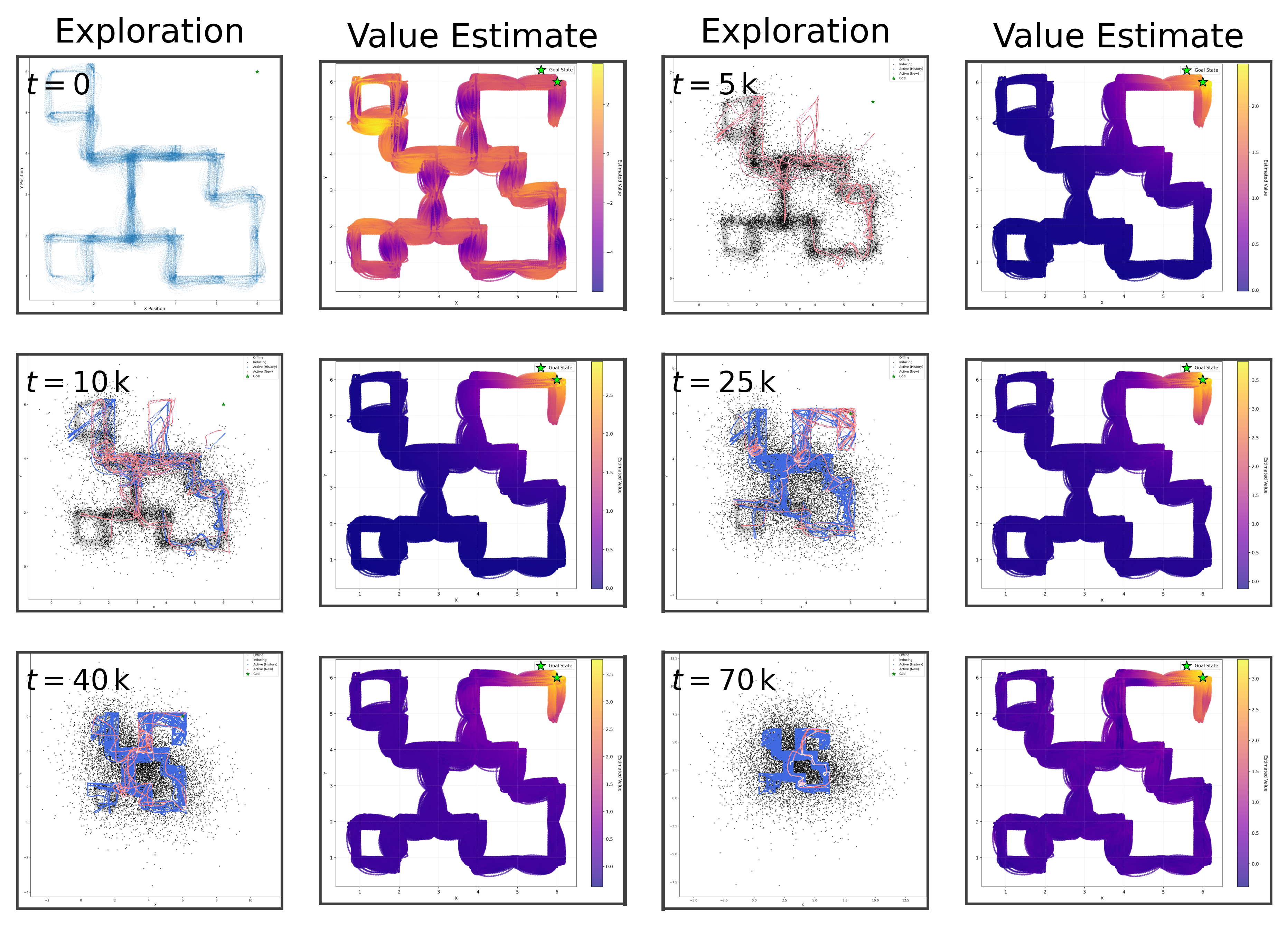}
    \caption{
    Uncertainty-driven exploration and value estimation in
    \texttt{maze2d-medium-hard}.
    For each timestep $t$, the exploration map (left) and the
    corresponding to the value function estimate (right). Value Estimates are calculated using the full offline dataset.
    }
    \label{fig:maze_medium_exploration_value_plot}
\end{figure*}

\begin{figure}[t]
    \centering
    \includegraphics[
        width=0.7\columnwidth
    ]{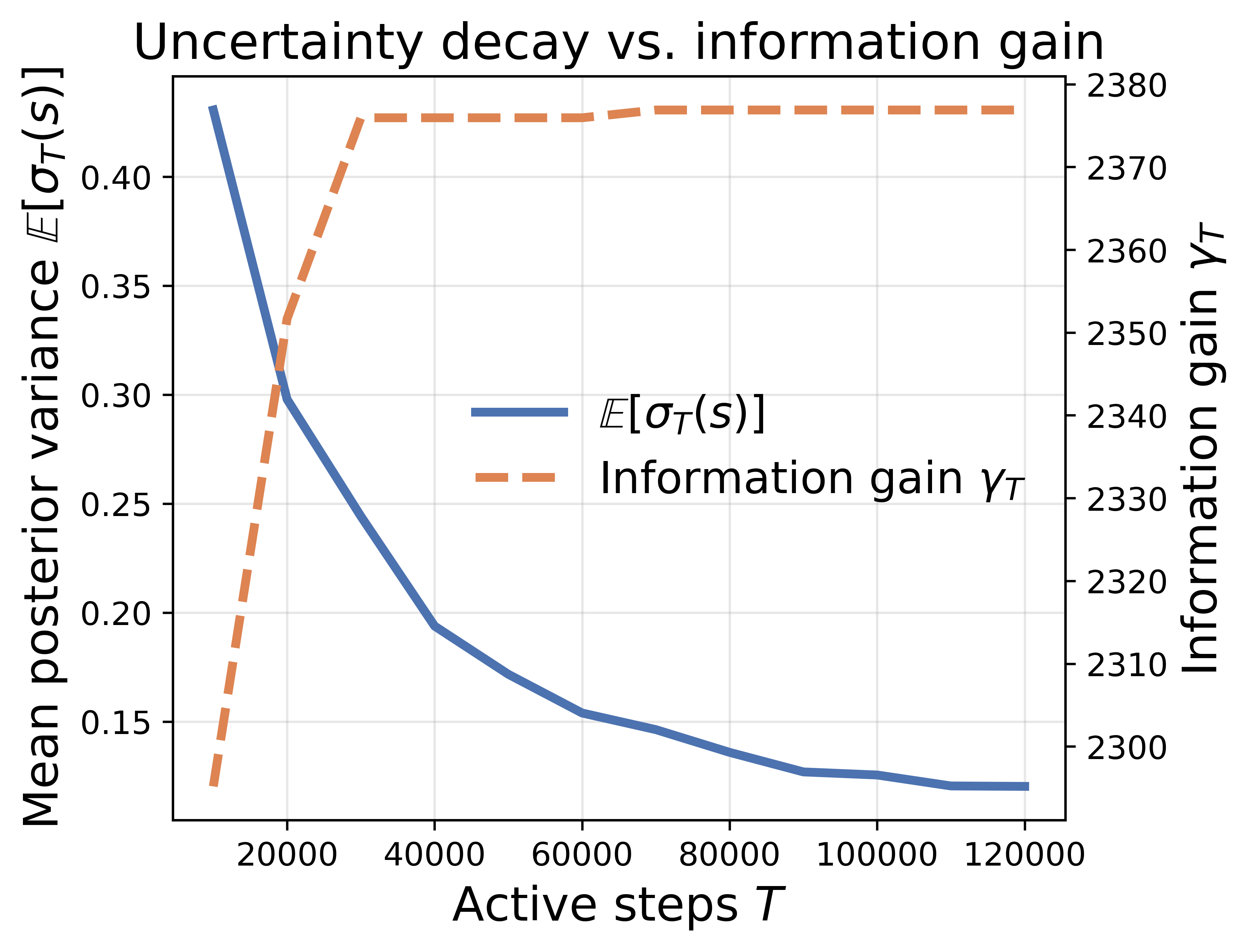}
    \caption{
\textbf{Uncertainty decay vs.\ information gain (halfcheetah-random-v2 ).}
The mean posterior variance decreases, while the cumulative information gain increases with active steps $T$, consistent with the GP concentration bounds $\sigma_T = O(\gamma_T / T)$.
}

    \label{fig:uncert_ig_plot}
\end{figure}

The results in Table~\ref{tab:combined_results} summarize the performance across all benchmarks.
Across both dense-reward locomotion and sparse-reward navigation tasks, uncertainty-aware active data collection consistently improves performance
over behavior cloning, purely offline RL, and random exploration baselines while requiring substantially fewer environment interactions.

\subsection{Performance and Sample Efficiency}
Across locomotion environments, our method matches or exceeds the strongest baseline while requiring substantially fewer samples. On \texttt{halfcheetah-medium-v2} and \texttt{hopper-medium-v2}, we obtained normalized scores of 57.7 and 85.1, respectively, with approximately 30--32\% fewer samples. Similar behavior is observed on \texttt{walker2d-medium-v2}, where comparable performance is achieved with over 40\% of the sample reduction. These results indicate that uncertainty-guided exploration remains effective
even in dense reward settings.

The gains are significantly larger in sparse-reward navigation tasks.
On \texttt{Maze2D}, our approach consistently outperforms all baselines, often by wide margins.
For example, in \texttt{maze2d-umaze-hard-v1} and
\texttt{maze2d-medium-easy-v1}, we observe improvements exceeding 50 normalized points over random exploration while reducing the total
number of samples by more than 75\%.
This highlights the failure of unguided exploration in long-horizon goal-conditioned tasks and the importance of targeting epistemically uncertain regions.

\subsection{Uncertainty-Driven Exploration and Value Propagation}
Figure~\ref{fig:maze_medium_exploration_value_plot} visualizes how uncertainty guides exploration in \texttt{maze2d-medium-hard}.
Early in training, high posterior variance concentrates near unexplored bottlenecks and dead-end areas.
As active data are collected, the uncertainty decreases locally, and the value function estimate propagates coherently through the maze.
This behavior aligns with the Bellman contraction argument: once uncertainty is reduced along critical transitions, value information propagates rapidly across long horizons.

\subsection{Uncertainty Decay and Learning Dynamics}
Figure~\ref{fig:learning_uncertainty_combined} compares learning curves and uncertainty reduction in a dense-reward task (\texttt{HalfCheetah-Random}) and a sparse-reward task (\texttt{Maze2D-Medium-Hard}).
In both cases, the performance improvements coincided with monotonic decreases in the GP posterior variance.
Notably, uncertainty decays more rapidly in dense-reward environments, where frequent reward feedback accelerates value estimation, while sparse-reward tasks exhibit slower but more structured uncertainty reduction concentrated around task-relevant regions.

\subsection{Information Gain and Theoretical Verification}
To directly validate the information-theoretic argument underlying
Theorem~\ref{thm:gp_sample_complexity}, we track both the mean posterior
variance $\mathbb{E}[\sigma_T(s)]$ and the cumulative information gain
$\gamma_T$ during active exploration.
As predicted, uncertainty decreases as a function of active steps,
while the information gain grows sublinearly with $T$.
This empirical relationship confirms that uncertainty-maximizing queries
effectively reduce posterior variance at a rate consistent with
$\mathcal{O}(\sqrt{\gamma_T/T})$, validating the role of information gain
in driving sample efficiency improvements  Figure 
\ref{fig:uncert_ig_plot},\ref{fig:uncert1_ig_plot}.

\subsection{Scalability and Practical Considerations}
Gaussian process models are not inherently scalable, and even sparse variational approximations introduce nontrivial computational overheads. In our experiments, we use between 5{,}000 and 25{,}000 inducing points for the tasks. While this setting is sufficient to produce reliable uncertainty estimates
and stable value learning, training costs remain higher than those of purely neural approaches.
Therefore, our goal is not to claim scalability parity with deep methods but to empirically validate the uncertainty and sample complexity guarantees derived in our analysis.

\begin{figure*}[t]
    \centering
    \includegraphics[
        width=0.99\linewidth
    ]{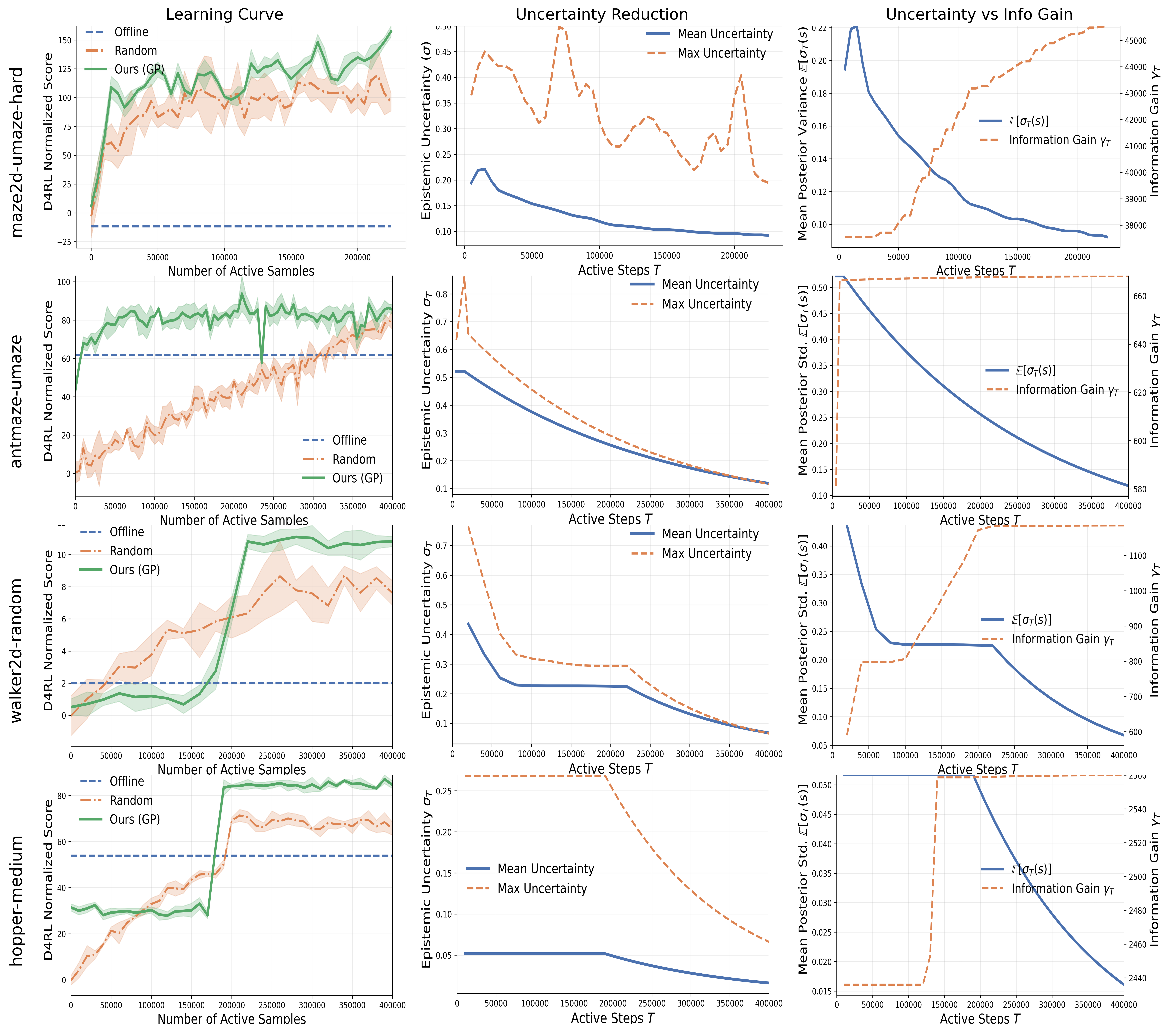}
    \caption{
        \textbf{Learning dynamics in different environments.}
        Each row corresponds to a different environment, and the columns show
        (\textbf{a}) the learning curve under active data collection,
        (\textbf{b}) the reduction of GP epistemic uncertainty over active steps,
        and (\textbf{c}) the relationship between posterior uncertainty and cumulative information gain.
    }
    \label{fig:uncert1_ig_plot}
\end{figure*}

%===============================
\section{Concluding Remarks}
The GP-based active offline RL algorithm should be viewed as a continuous, 
kernelized limit of the deep ActiveRL method. 
It idealizes the uncertainty estimation process by assuming exact Bayesian posteriors, 
This enables a rigorous analysis of uncertainty reduction and value-function convergence. 
In practice, ensemble-based uncertainty estimates approximate the same principle. 
regions of high epistemic uncertainty correspond to high GP variance. 
Hence, our theoretical results provide a formal justification for the empirical success 
of uncertainty-guided exploration in Active Offline RL.

% Acknowledgements should only appear in the accepted version.
%\section*{Acknowledgements}

%\section*{Impact Statement}
%This paper presents theoretical work whose primary goal is to advance the understanding of reinforcement learning algorithms in the context of offline-to-online reinforcement learning.
%While our results may inform the design of more sample-efficient learning methods. We do not believe that our work will have any significant societal consequences.

% In the unusual situation where you want a paper to appear in the
% references without citing it in the main text, use \nocite
\nocite{langley00}

\bibliography{references}
\bibliographystyle{icml2026}

%%%%%%%%%%%%%%%%%%%%%%%%%%%%%%%%%%%%%%%%%%%%%%%%%%%%%%%%%%%%%%%%%%%%%%%%%%%%%%%
%%%%%%%%%%%%%%%%%%%%%%%%%%%%%%%%%%%%%%%%%%%%%%%%%%%%%%%%%%%%%%%%%%%%%%%%%%%%%%%
% APPENDIX
%%%%%%%%%%%%%%%%%%%%%%%%%%%%%%%%%%%%%%%%%%%%%%%%%%%%%%%%%%%%%%%%%%%%%%%%%%%%%%%
%%%%%%%%%%%%%%%%%%%%%%%%%%%%%%%%%%%%%%%%%%%%%%%%%%%%%%%%%%%%%%%%%%%%%%%%%%%%%%%
\newpage
\appendix
\onecolumn

\begin{center}
    \text{\Large \textbf{Sample Efficient Active Algorithms for Offline Reinforcement Learning}} \\
    \text{\Large \textbf{Appendix}}
\end{center}

\section{Notation}
\label{sec:notation_quantities}

We summarize here the key quantities and constants appearing in 
\Cref{thm:gp_sample_complexity} and throughout the analysis.

\paragraph{Active Offline RL notation:}
\begin{itemize}[leftmargin=*, topsep=1pt]
    \item $J(\pi)$: the expected discounted return of a policy $\pi$.
    \item $\gamma \in (0,1)$: the discount factor of the MDP,
    This also determines the contraction rate of the Bellman operator.     
    \item $T$: number of active sampling rounds (each round adds one transition).
    \item $C$: a universal numerical constant that collects normalization and inequality constants.
    \item $\delta$: the confidence level for high-probability guarantees ($1-\delta$).
\end{itemize}

\paragraph{Lipschitz and smoothness constants:}
\begin{itemize}[leftmargin=*, topsep=1pt]
    \item $L_p$: the Lipschitz constant of the transition kernel $P(\cdot|s,a)$, defined in \Cref{ass:lipschitz_transitions}:
    \[
    \|P(\cdot|s,a) - P(\cdot|s',a')\|_1
    \le L_p(\|s-s'\| + \|a-a'\|).
    \]
    It measures the sensitivity of the next-state distributions to small changes in $(s,a)$.

    \item $L$: the induced Lipschitz constant of the Bellman operator $\mathcal{T}^\pi$, given by
    \[
    L = \gamma(1 + L_p),
    \]
    which quantifies how estimation errors in the value function 
    propagates through the Bellman update.
\end{itemize}

\paragraph{GP notations:}
\begin{itemize}[leftmargin=*, topsep=1pt]
    \item $\boldsymbol{B}$: the RKHS norm bound on the true value function, $\|V^*\|_k \le B$, from \Cref{ass:smoothness}.  
    It encodes the smoothness of $V^*$ relative to the Gaussian process (GP) kernel.

    \item $\sigma$: the standard deviation of the Gaussian observation noise $\eta_t \sim \mathcal{N}(0,\sigma^2)$,
    appeared in the GP regression model.

    \item $\beta_T$: the GP confidence width parameter,
    \[
    \beta_T = B + \sigma \sqrt{2(\gamma_T + \log(1/\delta))},
    \]
    This determines how tightly the GP posterior concentrates around the true function.

    \item $\gamma_T$: the \emph{maximum information gain} after $T$ samples,
    \[
    \gamma_T = \max_{|A|=T} I(\mathbf{y}_A;V)
    = \frac{1}{2}\log\!\big|I + \sigma^{-2}\mathbf{K}_A\big|.
    \]
    It measures the extent to which posterior uncertainty can be reduced after $T$ observations.
    
\end{itemize}

\section{Detailed Assumptions}
\label{appendix:assumptions}

\label{sec:assumptions}
\begin{assumption}[Bounded Rewards]
\label{ass:bounded_rewards}
The reward function is uniformly bounded:
$r(s,a) \in [0, R_{\max}]$ for all $(s,a) \in \Sspace \times \Aspace$.
\end{assumption}

\begin{assumption}[RKHS Smoothness]
\label{ass:smoothness}
The optimal value function satisfies $V^* \in \mathcal{H}_k$ with
$\|V^*\|_k \le B$, where $\mathcal{H}_k$ is the RKHS induced by kernel $k$.
\end{assumption}

\begin{assumption}[Lipschitz Transitions]
\label{ass:lipschitz_transitions}
The transition kernel $P(\cdot \mid s,a)$ is $L_p$-Lipschitz in $(s,a)$:

$ \|P(\cdot \mid s,a) - P(\cdot \mid s',a')\|_1
\le L_p \bigl(\|s-s'\| + \|a-a'\|\bigr),
\quad \forall (s,a),(s',a')$.

\end{assumption}

\begin{assumption}[Bounded Observation Noise]
\label{ass:noise}
Observed Bellman targets satisfy
$y_t = r_t + \gamma \mu_{t-1}(s_t') + \eta_t$,
where $\eta_t \sim \mathcal{N}(0,\sigma^2)$ i.i.d.\ with known $\sigma>0$.
\end{assumption}

\begin{definition}[Offline Coverage Radius]
\label{def:offline_coverage_measure}
Let $\sigma_0(s)$ denote the GP posterior standard deviation after conditioning
on the offline dataset $D_{\mathrm{off}}$.
Define $\sigma_{\max} := \max_{s \in \Sspace} \sigma_0(s).$
\end{definition}

\begin{assumption}[Bounded Offline Coverage]
\label{ass:offline_coverage}
The offline coverage radius is finite: $\sigma_{\max} < \infty$.
\end{assumption}
For any fixed policy $\pi$, the Bellman operator $\mathcal{T}^\pi$ acting on bounded value functions is given by
\[
(\mathcal{T}^\pi V)(s)
= r(s,\pi(s)) + \gamma\,\mathbb{E}_{s'\sim P(\cdot \mid s,\pi(s))}[V(s')].
\]
Moreover, $\mathcal{T}^\pi$ is a $\gamma$-contraction under the sup-norm.

\begin{definition}[Information Gain]
\label{def:information_gain}
Let $f : \Sspace \to \mathbb{R}$ be a latent function and $\mathcal{D}_A = \{(s_i, y_i)\}_{i=1}^{|A|}$ 
be noisy observations at input locations $A \subseteq \Sspace$, 
where $y_i = f(s_i) + \eta_i$ with independent noise $\eta_i \sim \mathcal{N}(0,\sigma^2)$.
The \emph{information gain} from observing $\mathbf{y}_A$ about $f$ is defined as mutual information:
$I(\mathbf{y}_A; f) = H(\mathbf{y}_A) - H(\mathbf{y}_A \mid f)$.

where $H(\cdot)$ denotes the Shannon differential entropy.
The \emph{maximum information gain} after $T$ observations is
$\gamma_T = \max_{A \subseteq \Sspace,\, |A| = T} I(\mathbf{y}_A; f).$
This quantity measures how much observing $T$ noisy samples can reduce the uncertainty about the latent function $f$.
The rate of growth of $\gamma_T$ depends on the smoothness properties of the underlying function class or kernel.
\end{definition}

\begin{definition}[Information Gain in Gaussian Process Regression]
\label{def:gp_information_gain}
Let $V \sim \mathcal{GP}(0, k)$ be a Gaussian Process prior with kernel $k$, 
and let $\mathbf{y}_A = [y_1, \ldots, y_T]^\top$ denote noisy observations of $V$ at points $A = \{s_1, \ldots, s_T\}$, 
where $y_i = V(s_i) + \eta_i$, $\eta_i \sim \mathcal{N}(0, \sigma^2)$.
Then the mutual information between the function $V$ and the observations $\mathbf{y}_A$ has the closed-form expression
$I(\mathbf{y}_A; V)
= \frac{1}{2} \log \big| I + \sigma^{-2} \mathbf{K}_A \big|$,
where $\mathbf{K}_A$ is the kernel matrix with entries $[\mathbf{K}_A]_{ij} = k(s_i,s_j)$.
The \emph{maximum information gain} after $T$ samples is
$\gamma_T = \max_{A \subseteq \Sspace,\, |A| = T} \frac{1}{2} \log \big| I + \sigma^{-2} \mathbf{K}_A \big|.$ The term $\gamma_T$ quantifies the maximum reduction in GP posterior uncertainty achievable with $T$ observations.
It depends solely on the choice of kernel $k$ and the input dimension $d$,
and plays a central role in controlling the cumulative posterior variance and sample complexity.
\end{definition}

\begin{lemma}[Bounded Information Gain for Smooth Kernels]
\label{lem:bounded_info_gain}
For commonly used stationary kernels, the maximum information gain
$\gamma_T$ satisfies
$\gamma_T = \mathcal{O}\!\left(g_k(T,d)\right)$,
where $g_k(T,d)$ depends on the kernel smoothness and the input
dimension $d$. In particular, $\gamma_T = \mathcal{O}\left((\log T)^{d+1}\right)$, for RBF, $\gamma_T = \mathcal{O}\left(T^{\frac{d}{2\nu + d}} \log T\right)$ for Mat\'ern, 
$\gamma_T = \mathcal{O}\left(d \log T\right)$ for linear kernels. 
\end{lemma}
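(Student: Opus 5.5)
The statement to prove is Lemma~\ref{lem:bounded_info_gain}. The plan is to follow the spectral approach of \citet{srinivas2009gaussian,Srinivas_2012}, which reduces $\gamma_T$ to the decay of the kernel operator's eigenvalues.

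\textbf{Step 1: reduce to a spectral quantity.} Fix a finite-measure (or compact) domain $\Sspace\subset\mathbb{R}^d$ and a base measure $\mu$ on it. Mercer's theorem gives
\[
k(s,s')=\sum_{m\ge1}\lambda_m\phi_m(s)\phi_m(s'),
\]
with orthonormal eigenfunctions $\phi_m$ and decreasing eigenvalues $\lambda_m$. For any $A$ with $|A|=T$, use $\log|I+\sigma^{-2}\mathbf{K}_A|=\sum_i\log(1+\sigma^{-2}\hat\lambda_i)$, where $\hat\lambda_i$ are the eigenvalues of $\mathbf{K}_A$.

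\textbf{Step 2: split the kernel.} Write $k=k_{\le m}+k_{>m}$. Apply the subadditivity bound
\[
\log|I+\sigma^{-2}(\mathbf{K}^{(1)}+\mathbf{K}^{(2)})|\le\log|I+\sigma^{-2}\mathbf{K}^{(1)}|+\sigma^{-2}\operatorname{tr}\mathbf{K}^{(2)}.
\]
The first term has rank at most $m$, and bounded eigenfunctions give $\|\mathbf{K}^{(1)}\|\le cT$, so it is at most $m\log(1+cT/\sigma^2)$. The second term is at most $\sigma^{-2}T\sum_{j>m}\lambda_j\sup\phi_j^2$. This yields
\[
\gamma_T\lesssim\min_m\Big(m\log T+T\sum_{j>m}\lambda_j\Big).
\]
The Srinivas et al.\ argument instead goes through a greedy/submodularity relaxation to an optimization over designs, but the bound has the same form.

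\textbf{Step 3: plug in each kernel's spectrum.}
\begin{itemize}
\item \emph{Linear kernel.} The rank is $d$, so the tail is zero and $m=d$ gives $\gamma_T=\mathcal{O}(d\log T)$.
\item \emph{RBF kernel.} In $d$ dimensions the eigenvalues decay like $\lambda_j\lesssim\exp(-cj^{1/d})$. The tail sum beyond $m$ is of order $m^{(d-1)/d}\exp(-cm^{1/d})$. Choosing $m\asymp(\log T)^d$ makes $T\cdot\text{tail}=\mathcal{O}(1)$ up to polylog factors. The dominant term is then $m\log T=(\log T)^{d+1}$.
\item \emph{Mat\'ern kernel.} The eigenvalues decay polynomially, $\lambda_j\asymp j^{-(2\nu+d)/d}$, so the tail is $\asymp m^{-2\nu/d}$. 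Balancing $m\log T$ against $Tm^{-2\nu/d}$ gives $m\asymp T^{d/(2\nu+d)}$ and hence $\gamma_T=\mathcal{O}(T^{d/(2\nu+d)}\log T)$. This matches the refined rate of Vakili et al.
\end{itemize}

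\textbf{Main obstacle.} The hard step is the uniform control of eigenfunctions, namely $\sup|\phi_j|$, needed to bound $\operatorname{tr}\mathbf{K}^{(2)}$ for adversarial point sets $A$. For general Mercer expansions this sup-norm is not uniformly bounded. I would first try the Vakili et al.\ route: assume uniformly bounded eigenfunctions, which hold for Mat\'ern via the Fourier/Sobolev characterization on a compact domain. For RBF I would use the explicit Hermite-based expansion with a Gaussian measure, truncating the domain to a ball. The Mat\'ern case is the delicate one, since the naive Srinivas analysis loses extra powers of $T$. Since the statement is only asserted ``for commonly used stationary kernels,'' I would state these regularity conditions explicitly and cite the known spectral results rather than rederive them.
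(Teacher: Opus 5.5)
Your proposal takes essentially the same route as the paper, which justifies the lemma in one line by appeal to the spectral decay of the kernel integral operator. Your Mercer truncation combined with $\log|I+\sigma^{-2}(\mathbf{K}^{(1)}+\mathbf{K}^{(2)})|\le\log|I+\sigma^{-2}\mathbf{K}^{(1)}|+\sigma^{-2}\operatorname{tr}\mathbf{K}^{(2)}$ makes that mechanism explicit. You are also right that the stated Mat\'ern exponent $d/(2\nu+d)$ needs this Vakili-style argument rather than the original Srinivas et al.\ analysis.

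The one point to tighten is eigenfunction boundedness. For RBF your Hermite route works, because Cram\'er's inequality bounds the Gaussian-measure eigenfunctions on a ball uniformly in the index. For Mat\'ern on a general compact domain, uniform boundedness is not automatic. So either keep it as an explicit assumption, or replace the Mercer truncation by a rank-$m$ Nystr\"om approximation at quasi-uniform points. Its diagonal residual is the squared power function, which is uniformly $\mathcal{O}(m^{-2\nu/d})$, so you get the same balance without any eigenfunction bounds.
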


This result follows from the spectral decay of the kernel-integral operator. For the squared-exponential kernel, the eigenvalues decay super-polynomially,
yielding $\gamma_T = \mathcal{O}((\log T)^{d+1})$ and ensuring sublinear uncertainty reduction.

These assumptions collectively establish a smooth, well-behaved environment in which the GP posteriors are calibrated and the Bellman updates are stable.  
They are mild and standard in GP learning theory and kernelized RL \citep{chowdhury2017kernelized} 
yet sufficient to yield our main ${\mathcal{O}}(1/\epsilon^2)$ sample complexity result.

\section{Key Lemmas}
\label{sec:key_lemmas}

This section presents the key intermediate results used to prove 
\Cref{thm:gp_sample_complexity}.  
Each lemma isolates a distinct component of the analysis—concentration, variance control, 
Bellman error propagation and performance improvement.
The full proofs are deferred to Appendix~\ref{appendix:proofs}.

\begin{lemma}[GP Concentration Bound]
\label{lem:gp_concentration}
Under \Cref{ass:smoothness} - \Cref{ass:noise}, with probability at least $1-\delta$, 
for all $t \ge 1$ and $s \in \Sspace$,
\[
|V^*(s) - \mu_{t-1}(s)| 
\le 
\big(B + \sigma\sqrt{2(\gamma_{t-1} + \log(1/\delta))}\big)\,
\sigma_{t-1}(s).
\]
\textbf{Role:} This lemma bounds the estimation error of the GP posterior mean in terms of the predictive variance.  
It establishes a high-probability confidence envelope around $V^*$ that underpins the entire uncertainty-driven exploration argument.  
\textbf{Proof sketch:} Follows directly from the GP-UCB confidence bound 
\citep[Theorem~2]{srinivas2009gaussian} applied to the value function $V^*$.
A complete proof is provided in Appendix~\ref{appendix:proofs}.
\end{lemma}

\begin{lemma}[Variance–Information Gain Relationship]
\label{lem:var_sum}
For any sequence of queried states $\{s_t\}_{t=1}^T$, the cumulative predictive variance satisfies
\[
\sum_{t=1}^{T}\sigma_{t-1}^2(s_t) \le C_\sigma\,\gamma_T,
\]
where $\gamma_T$ is the maximum information gain defined in \Cref{ass:offline_coverage}, 
and $C_\sigma$ depend only on the kernel $k$ and noise variance $\sigma^2$.
\textbf{Role:} This lemma connects posterior uncertainty reduction to the information gain, 
This shows that active queries in high-variance regions achieve sublinear cumulative uncertainty.  
\textbf{Proof sketch:} Follows from the variance-sum inequality in 
\citet[Lemma~5]{srinivas2009gaussian}, restated for our RL setting.
\end{lemma}

\begin{lemma}[Lipschitz Propagation of Bellman Error]
\label{lem:bellman_lip}
Under \Cref{ass:lipschitz_transitions}, 
the Bellman operator $\mathcal{T}^\pi$ satisfies
\[
|\mathcal{T}^\pi V(s) - \mathcal{T}^\pi \tilde{V}(s)|
\le 
\gamma(1+L_p)\|V - \tilde{V}\|_\infty,
\quad \forall V,\tilde{V}:\Sspace\to\mathbb{R}.
\]
\textbf{Role:} This lemma bounds how estimation errors in the value function propagate through Bellman updates, 
This provides geometric control over the error amplification across iterations.  
\textbf{Proof sketch:} Derived using the contraction property of $\mathcal{T}^\pi$ 
and the $L_p$–Lipschitz continuity of $P(\cdot|s,a)$; full proof in Appendix~\ref{appendix:proofs}.
\end{lemma}

\begin{lemma}[Expected Performance Difference]
\label{lem:performance_diff}
Let $\pi_T$ denote the policy after $T$ active rounds and $\mu_T$ be the corresponding GP mean estimate.  
Then, under \Cref{ass:bounded_rewards},
\[
J(\pi^*) - J(\pi_T)
\le
\left(\beta_T + \frac{L}{1-\gamma}\right)
\mathbb{E}_{s\sim d_\rho^{\pi_T}}[\sigma_T(s)],
\]
where $d_\rho^{\pi_T}$ denotes the discounted state distribution under $\pi_T$.
\textbf{Role:} Links GP uncertainty $\sigma_T$ directly to policy suboptimality, 
bridging the statistical and control aspects of analysis.  
\textbf{Proof sketch:} Combines the GP concentration bound (\Cref{lem:gp_concentration}) 
with the Bellman Lipschitz property (\Cref{lem:bellman_lip}); 
see Appendix~\ref{appendix:proofs}.
\end{lemma}

\Cref{lem:gp_concentration} provides the high-probability confidence region for the GP posterior;
\Cref{lem:var_sum} ensures sublinear uncertainty reduction via information gain;
\Cref{lem:bellman_lip} guarantees controlled error propagation through the Bellman operator; 
and \Cref{lem:performance_diff} translates these into a bound on the policy performance.  
Together, these yield the main sample-complexity bound in 
\Cref{thm:gp_sample_complexity}.

\section{Proof of Main Theorem}
\label{sec:proofs}

This section presents detailed proofs of our main theoretical result 
(\Cref{thm:gp_sample_complexity}).  
Each proof follows a modular structure, invoking the lemmas established in 
\Cref{sec:key_lemmas} and the assumptions stated in 
\Cref{sec:assumptions}.  
For clarity, we first provide a brief roadmap outlining the logic of the argument.

The proof proceeds in four steps.
\begin{enumerate}[leftmargin=*]
    \item \textbf{Error decomposition:} Express the suboptimality gap as the sum of estimation and propagation errors.
    \item \textbf{Lemma application:} Use GP concentration (\Cref{lem:gp_concentration}) and Bellman Lipschitz propagation (\Cref{lem:bellman_lip}) to bound these errors.
    \item \textbf{Variance control:} Apply the variance–information gain relationship (\Cref{lem:var_sum}) to express expected uncertainty reduction as a function of the number of active queries $T$.
    \item \textbf{Combining results:} Put all bounds to obtain the final rate, and concluding the proof of \Cref{thm:gp_sample_complexity}.
\end{enumerate}

\begin{proof}[Proof of \Cref{thm:gp_sample_complexity}]
\textbf{Step 1: Decomposition of Suboptimality.}
From the definition of expected return (\Cref{eq:objective}), we express the suboptimality gap as
\[
J(\pi^*) - J(\pi_T) 
= 
\E_{s\sim\rho}[V^*(s) - V^{\pi_T}(s)].
\]
Following the standard approach, we decompose the gap into two terms:
\[
V^*(s) - V^{\pi_T}(s)
=
[V^*(s) - \mu_T(s)]
+
[\mu_T(s) - V^{\pi_T}(s)].
\]
The first term corresponds to the \emph{estimation error} between the true and GP-estimated value, 
The second represents the \emph{propagation error} due to imperfect Bellman updates under $\pi_T$.

\textbf{Step 2: Bounding the estimation and propagation terms.}
By the GP concentration inequality (\Cref{lem:gp_concentration}) and by \Cref{ass:smoothness}, 
we have, with probability at least $1-\delta$,
\[
|V^*(s) - \mu_T(s)| \le \beta_T \sigma_T(s),
\quad 
\beta_T = B + \sigma \sqrt{2(\gamma_T + \log(1/\delta))}.
\]
Next, invoking  lemma \ref{lem:bellman_lip} 
,
we obtain
\[
|\mu_T(s) - V^{\pi_T}(s)|
\le 
\frac{L}{1-\gamma}\,\sigma_T(s),
\]
where $L = \gamma(1+L_p)$ by \Cref{ass:lipschitz_transitions}.
Taking expectations over $s \sim d_\rho^{\pi_T}$ yields
\begin{equation}
\label{eq:proof_decomp}
J(\pi^*) - J(\pi_T)
\le
\Big(\beta_T + \frac{L}{1-\gamma}\Big)
\E_{s\sim d_\rho^{\pi_T}}[\sigma_T(s)].
\end{equation}

\textbf{Step 3: Variance–Information Gain Relationship.}
By the variance–information gain lemma (\Cref{lem:var_sum}) 
and bounded information gain assumption (\Cref{ass:offline_coverage}),
the cumulative posterior variance satisfies
\[
\sum_{t=1}^{T} \sigma_{t-1}^2(s_t) 
\le 
C_\sigma\,\gamma_T.
\]
By Jensen’s inequality, we have
\[
\E[\sigma_T(s)] 
\le 
\sqrt{\E[\sigma_T^2(s)]}
\le 
\sqrt{\frac{C_\sigma \gamma_T}{T}}.
\]
Substituting this into \Cref{eq:proof_decomp} gives
\begin{equation}
\label{eq:proof_bound}
J(\pi^*) - J(\pi_T)
\le
C'
\left(\beta_T + \frac{L}{1-\gamma}\right)
\sqrt{\frac{\gamma_T}{T}},
\end{equation}
where $C'$ absorbs constants from $C_\sigma$ and the expectations over the data distribution.

\textbf{Step 4: Concluding the Bound.}
Rearranging \Cref{eq:proof_bound} to ensure $\epsilon$-optimality 
($J(\pi^*) - J(\pi_T) \le \epsilon$) 
yields the required number of active rounds:
\[
T
\ge
\tilde{\mathcal{O}}\!\left(
\frac{
(\beta_T + \frac{L}{1-\gamma})^2\,\gamma_T
}{\epsilon^2}
\right).
\]
Since each active round corresponds to at most $\bar{H} \le (1-\gamma)^{-1}$ transitions,
the total number of active transitions required is
\[
M = T \cdot \bar{H}
= 
\tilde{\mathcal{O}}\!\left(
\frac{1}{\epsilon^2 (1-\gamma)^2}
\right),
\]
establishing the claim in \Cref{thm:gp_sample_complexity}.
\end{proof}

The proof demonstrates how GP uncertainty contracts with active querying and how this contraction translates into an improved policy value.  
The key novelty lies in connecting the information-gain-based posterior variance decay (\Cref{lem:var_sum})
to Bellman stability via Lipschitz propagation (see \Cref{lem:bellman_lip}).
This coupling enables us to achieve a $\tilde{\mathcal{O}}(1/\epsilon^2)$ rate 
while improving horizon dependence from $(1-\gamma)^{-4}$ (offline RL) 
to $(1-\gamma)^{-2}$ (active offline RL).
This argument unifies the GP learning theory with the RL sample complexity analysis in a nonparametric setting.

\section{Full Proofs}
\label{appendix:proofs}

This appendix provides detailed proofs for the lemmas and main theorem stated in 
\Cref{sec:key_lemmas,sec:proofs}.  
All notations follow \Cref{sec:notation_quantities}, and the assumptions are in \Cref{sec:assumptions}.

\paragraph{Performance Difference Lemma.}
For any two stationary policies $\pi$ and $\pi'$, the difference in their expected discounted returns satisfies
\begin{equation}
\label{eq:perf_diff_lemma}
J(\pi') - J(\pi)
= \frac{1}{1 - \gamma}\,
\mathbb{E}_{s \sim d_\rho^{\pi'}} 
\!\left[
(\mathcal{T}^{\pi'} V^\pi)(s) - V^\pi(s)
\right],
\end{equation}
where $d_\rho^{\pi'}(s) = (1 - \gamma) \sum_{t=0}^{\infty} \gamma^t 
P(s_t = s \mid s_0 \sim \rho, \pi')$ 
denotes the discounted state visitation distribution of $\pi'$,
and $\mathcal{T}^{\pi'}$ is the Bellman operator associated with $\pi'$.

\noindent
\textbf{Intuition:}
This lemma expresses the expected return difference between two policies in terms of the Bellman residual of $V^\pi$ under new policy $\pi'$.
This is a standard result in policy evaluation and serves as the foundation for many regret and sample complexity analyses in reinforcement learning.

\subsection{Proof of Lemma~\ref{lem:gp_concentration} (GP Concentration Bound)}
We restate this lemma for clarity.

\begin{lemma}[GP Concentration Bound]
Under \Cref{ass:smoothness} - \Cref{ass:noise}, with probability at least $1-\delta$, 
for all $t \ge 1$ and $s \in \Sspace$,
\[
|V^*(s) - \mu_{t-1}(s)| 
\le 
\big(B + \sigma\sqrt{2(\gamma_{t-1} + \log(1/\delta))}\big)\,
\sigma_{t-1}(s).
\]
\end{lemma}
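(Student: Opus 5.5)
The plan is to follow the standard RKHS/self-normalized argument of \citet{chowdhury2017kernelized} (the frequentist refinement of \citet[Theorem~2]{srinivas2009gaussian}) rather than the Bayesian GP-UCB bound. \Cref{ass:smoothness} puts $V^*$ in $\mathcal{H}_k$ as a fixed function with $\|V^*\|_k \le B$; it is not a GP sample. I will work with the observation model of the preliminaries, $y_i = V^*(s_i) + \eta_i$, over the $N_{t-1}$ points conditioned on at round $t$ (offline plus active). Write $\mathbf{f}_{t-1} = [V^*(s_1),\dots,V^*(s_{N_{t-1}})]^\top$ and $\boldsymbol{\eta}_{t-1}$ for the noise vector. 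By \eqref{eq:gp_mean},
\[
\mu_{t-1}(s) - V^*(s) = \bigl(\mathbf{k}_{t-1}(s)^\top(\mathbf{K}_{t-1}+\sigma^2 I)^{-1}\mathbf{f}_{t-1} - V^*(s)\bigr) + \mathbf{k}_{t-1}(s)^\top(\mathbf{K}_{t-1}+\sigma^2 I)^{-1}\boldsymbol{\eta}_{t-1}.
\]
The first term is a deterministic bias and the second a noise term. Each is bounded separately by a multiple of $\sigma_{t-1}(s)$.

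\textbf{Bias term.} Let $\phi(\cdot) = k(\cdot,\cdot)$ be the feature map and $\Phi$ the operator stacking $\phi(s_1),\dots,\phi(s_{N_{t-1}})$. The bias equals $\langle V^*, \Phi^\top(\Phi\Phi^\top+\sigma^2 I)^{-1}\Phi\phi(s) - \phi(s)\rangle_k$. The push-through identity rewrites this as $-\sigma^2\langle V^*, (\Phi^\top\Phi+\sigma^2 I)^{-1}\phi(s)\rangle_k$. Cauchy--Schwarz in $\mathcal{H}_k$ then bounds it by $\|V^*\|_k \,\sigma\,\|(\Phi^\top\Phi+\sigma^2 I)^{-1/2}\phi(s)\|_k$. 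Finally, the identity $\sigma^2\,\phi(s)^\top(\Phi^\top\Phi+\sigma^2 I)^{-1}\phi(s) = \sigma_{t-1}^2(s)$, which matches \eqref{eq:gp_var}, gives the bias bound $B\,\sigma_{t-1}(s)$.

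\textbf{Noise term.} The same feature representation and Cauchy--Schwarz give
\[
|\mathbf{k}_{t-1}(s)^\top(\mathbf{K}_{t-1}+\sigma^2 I)^{-1}\boldsymbol{\eta}_{t-1}| \le \sigma^{-1}\sigma_{t-1}(s)\,\bigl\|\boldsymbol{\eta}_{t-1}\bigr\|_{\mathbf{K}_{t-1}(\mathbf{K}_{t-1}+\sigma^2 I)^{-1}}.
\]
Next I invoke the self-normalized martingale inequality for kernels \citep[Theorem~1]{chowdhury2017kernelized}. It applies because the query $s_t = \arg\max_s \sigma_{t-1}(s)$ is measurable with respect to the past, and $\eta_t$ is independent $\sigma$-sub-Gaussian noise given that past (\Cref{ass:noise}). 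The inequality states that, with probability at least $1-\delta$ simultaneously for all $t$ (via the stopping-time construction), the weighted norm is at most $\sigma\sqrt{2(\tfrac12\log|I+\sigma^{-2}\mathbf{K}_{t-1}| + \log(1/\delta))}$. I then use $\tfrac12\log|I+\sigma^{-2}\mathbf{K}_{t-1}| \le \gamma_{t-1}$ from \Cref{def:gp_information_gain}, possibly absorbing a factor of $2$ by slightly inflating constants (the original statement has $\log|I+\eta K|$ with a regularizer $1+\eta$). Adding the two pieces yields the claimed envelope, uniformly over $s\in\Sspace$ and $t\ge1$, since the only random event is the single martingale event.

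\textbf{Main obstacle.} The hard step is reconciling the noise model with the algorithm. The Bellman targets of \Cref{ass:noise} are $y_t = r_t + \gamma\mu_{t-1}(s_t') + \eta_t$, not $V^*(s_t) + \eta_t$. Their residual $r_t + \gamma\mu_{t-1}(s_t') - V^*(s_t)$ therefore contains a random transition term and a bootstrapping bias. I would first try to state the lemma under the preliminaries' model $y_i = V^*(s_i)+\eta_i$, treating the targets as unbiased. Failing that, I would add the residual to the noise. Its martingale part, the centered $\gamma(V(s_t') - \E V(s_t'))$, is bounded by \Cref{ass:bounded_rewards} and hence sub-Gaussian, so it only enlarges $\sigma$. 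The bootstrap bias $\gamma(\mu_{t-1}-V^*)$ is not mean-zero and would need a separate induction term. I expect this term to be the genuine gap.

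A secondary bookkeeping point is that $\mathbf{K}_{t-1}$ includes the $N$ offline points. The log-determinant is then an information gain over $N+t-1$ points, so $\gamma_{t-1}$ must be read as $\gamma_{N+t-1}$, or bounded using the offline posterior as the prior for the active phase, in which case $\sigma_{\max}$ from \Cref{ass:offline_coverage} enters.
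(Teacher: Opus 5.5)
Your bias/noise decomposition and the kernel self-normalized inequality of \citet{chowdhury2017kernelized} are the right tools. They are also more to the point than the paper's proof, which is a one-line appeal to GP-UCB concentration \citep{srinivas2009gaussian} under the idealized model $y_i=V^*(s_i)+\eta_i$. (Theorem~2 there is moreover the Bayesian bound for $V$ drawn from the GP, not for a fixed RKHS function.)

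\textbf{The failing step.} The step that fails is your last one, ``adding the two pieces yields the claimed envelope.'' By your own intermediate bounds the noise term is at most $\sigma^{-1}\sigma_{t-1}(s)\cdot\sigma\sqrt{2(\tfrac12\log|I+\sigma^{-2}\mathbf{K}_{t-1}|+\log(1/\delta))}$. The two factors of $\sigma$ cancel, so what you have proved is
\[
|V^*(s)-\mu_{t-1}(s)|\le\Bigl(B+\sqrt{2\bigl(\tfrac12\log|I+\sigma^{-2}\mathbf{K}_{t-1}|+\log(1/\delta)\bigr)}\Bigr)\,\sigma_{t-1}(s).
\]
This implies the stated envelope only when $\sigma\ge1$. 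For $\sigma<1$ nothing can close the gap, because the stated inequality is not scale-invariant. Replacing $(V^*,\eta,k)$ by $(cV^*,c\eta,c^2k)$ leaves $\|V^*\|_k$ and every information gain unchanged. It multiplies $\mu_{t-1}$, $\sigma_{t-1}$ and the left-hand side by $c$, but multiplies $\sigma\sqrt{\cdot}\,\sigma_{t-1}(s)$ by $c^2$.

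Concretely, take $V^*\equiv0$ (so $B=0$ is admissible) and one conditioned observation at $s_1$ with $k(s_1,s_1)=1$. The error at $s_1$ is $|\eta_1|/(1+\sigma^2)$, of order $\sigma$. The stated envelope is of order $\sigma^2\sqrt{\log(1/\sigma)+\log(1/\delta)}$, so it fails with probability tending to one as $\sigma\to0$.

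The prefactor $\sigma$ is Chowdhury--Gopalan's $R$. It survives only in their convention, where the regularizer is near $1$ independently of the noise level; with regularizer $\sigma^2$, as in \eqref{eq:gp_mean}, it cancels. You should state the corrected envelope rather than claim the given one. Relatedly, no inflation of constants is needed, nor would it be permitted: the self-normalized bound in feature space with regularizer $\sigma^2$ gives exactly $\tfrac12\log|I+\sigma^{-2}\mathbf{K}_{t-1}|$.

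\textbf{The issues you flag.} Both are genuine defects of the statement rather than bookkeeping, and the paper's proof passes over both.

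First, the log-determinant must cover all $N+t-1$ conditioned points. Already at $t=1$, where $\gamma_0=0$, take $V^*\equiv0$ and $N$ offline points with $k(s_i,s_i)=1$ and $k(s_i,s_j)=0$ for $i\ne j$. The errors $\eta_i/(1+\sigma^2)$ are independent, and their largest ratio to $\sigma_0(s_i)$ grows like $\sqrt{\log N}$. This exceeds any multiple not depending on $N$.

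Second, under \Cref{ass:noise} your fallback of absorbing the residual into the noise cannot work. The conditional mean of $y_t$ is $r(s_t,a_t)+\gamma\E[\mu_{t-1}(s_t')\mid s_t,a_t]$. Its deviation from $V^*(s_t)$ contains $Q^{\pi^*}(s_t,a_t)-V^*(s_t)$, which is strictly negative whenever $\epsilon$-greedy picks a suboptimal action. Even if the inductive hypothesis made the bootstrap part vanish, this bias would persist. Meanwhile the envelope tends to zero under max-variance sampling (e.g.\ for the squared-exponential kernel), so no induction restores the bound.

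The lemma is provable only under the preliminaries' model $y_i=V^*(s_i)+\eta_i$, which is exactly what the paper's citation tacitly assumes. In that model your argument is a complete proof, once you use the full log-determinant in place of $\gamma_{t-1}$ and drop the factor $\sigma$.
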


\begin{proof}
The proof follows directly from the GP-UCB concentration inequality 
\citep{srinivas2009gaussian}.  
Let $V^* \in \mathcal{H}_k$ denote the true value function with $\|V^*\|_k \le B$ by \Cref{ass:smoothness}.
For observations $\{(s_i, y_i)\}_{i=1}^{t-1}$ satisfying the Gaussian noise model (\Cref{ass:noise}),  
the posterior mean and variance of the GP satisfy:
\[
\mu_{t-1}(s) = \mathbf{k}_{t-1}(s)^\top(\mathbf{K}_{t-1} + \sigma^2 I)^{-1}\mathbf{y}_{t-1}, \quad
\sigma_{t-1}^2(s) = k(s,s) - \mathbf{k}_{t-1}(s)^\top(\mathbf{K}_{t-1} + \sigma^2 I)^{-1}\mathbf{k}_{t-1}(s).
\]
By standard concentration for GPs, with probability at least $1-\delta$, for all $s$,
\[
|V^*(s) - \mu_{t-1}(s)| 
\le
\beta_t \sigma_{t-1}(s),
\quad
\beta_t = B + \sigma\sqrt{2(\gamma_{t-1} + \log(1/\delta))}.
\]
This result follows from bounding the deviation of the posterior mean from the true function in the RKHS norm.  
The argument uses the reproducing property $\langle V^*, k(s,\cdot)\rangle_k = V^*(s)$ and 
the mutual information $\gamma_{t-1}$ as a measure of the model complexity.
\end{proof}

\subsection{Proof of Lemma~\ref{lem:var_sum} (Variance–Information Gain Relationship)}
\begin{lemma}[Variance–Information Gain Relationship]
For any sequence of queried states $\{s_t\}_{t=1}^T$, 
\[
\sum_{t=1}^{T}\sigma_{t-1}^2(s_t) \le C_\sigma\,\gamma_T.
\]
\end{lemma}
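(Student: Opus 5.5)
This is the variance-sum inequality of \citet[Lemma~5]{srinivas2009gaussian}. I would prove it with the chain-rule decomposition of mutual information for Gaussian observations, plus an elementary inequality between $x$ and $\log(1+x)$ on a bounded interval. Throughout I treat $\sigma_{t-1}$ as the posterior conditioned on the earlier queries $s_1,\dots,s_{t-1}$. Where the offline data are also conditioned on, the argument is the same, since conditioning on more data only shrinks variances.

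\textbf{Step 1 (chain rule).} Write $\mathbf{y}_T=(y_1,\dots,y_T)$ for the noisy observations at the queried states. By the chain rule for mutual information,
\[
I(\mathbf{y}_T;V)=\sum_{t=1}^T I(y_t;V\mid \mathbf{y}_{t-1}).
\]
Given $\mathbf{y}_{t-1}$, the observation $y_t$ is Gaussian with variance $\sigma^2+\sigma_{t-1}^2(s_t)$. Given $V$ as well, its variance is $\sigma^2$. Hence each term equals $\tfrac12\log\bigl(1+\sigma^{-2}\sigma_{t-1}^2(s_t)\bigr)$, and
\[
I(\mathbf{y}_T;V)=\tfrac12\sum_{t=1}^T\log\bigl(1+\sigma^{-2}\sigma_{t-1}^2(s_t)\bigr).
\]
Equivalently, one can expand $\log|I+\sigma^{-2}\mathbf{K}_T|$ by Schur complements. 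This identity holds even though the $s_t$ are chosen adaptively, because each $s_t$ depends only on $\mathbf{y}_{t-1}$.

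\textbf{Step 2 (linearizing the log).} Assume $k(s,s)\le 1$, or more generally $k(s,s)\le\kappa$. Then $\sigma^{-2}\sigma_{t-1}^2(s_t)\in[0,\kappa\sigma^{-2}]$. The map $x\mapsto x/\log(1+x)$ is increasing, so for $x\in[0,a]$ we have $x\le \frac{a}{\log(1+a)}\log(1+x)$. Applying this with $a=\kappa\sigma^{-2}$ gives
\[
\sigma_{t-1}^2(s_t)=\sigma^2\cdot\sigma^{-2}\sigma_{t-1}^2(s_t)\le \frac{\kappa}{\log(1+\kappa\sigma^{-2})}\log\bigl(1+\sigma^{-2}\sigma_{t-1}^2(s_t)\bigr).
\]

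\textbf{Step 3 (summing).} Summing over $t$ and using Step~1 gives
\[
\sum_{t=1}^T\sigma_{t-1}^2(s_t)\le C_\sigma\, I(\mathbf{y}_T;V)\le C_\sigma\gamma_T,
\qquad C_\sigma=\frac{2\kappa}{\log(1+\kappa\sigma^{-2})}.
\]
The last inequality holds because $\gamma_T$ is a maximum over all size-$T$ sets, and $\{s_1,\dots,s_T\}$ is one such set. The constant $C_\sigma$ depends only on $k$ (through $\kappa$) and $\sigma^2$, as the lemma claims.

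\textbf{Main obstacle.} The delicate point is the bookkeeping of what the posterior is conditioned on. If $\sigma_{t-1}$ also includes the offline data $D_{\mathrm{off}}$, then Step~1 computes the conditional information $I(\mathbf{y}_T;V\mid D_{\mathrm{off}})$. I would bound this by $\gamma_T$ as follows. First, $I(\mathbf{y}_T;V\mid D_{\mathrm{off}})=\tfrac12\log\bigl|I+\sigma^{-2}\mathbf{K}_T^{\mathrm{post}}\bigr|$, where $\mathbf{K}_T^{\mathrm{post}}$ is the posterior covariance of the queried points given $D_{\mathrm{off}}$. Second, $\mathbf{K}_T^{\mathrm{post}}\preceq\mathbf{K}_T$ in the Loewner order, so the log-determinant is at most $\tfrac12\log|I+\sigma^{-2}\mathbf{K}_T|\le\gamma_T$ by monotonicity of the log-determinant. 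The bound $\kappa$ on $\sigma_{t-1}^2$ used in Step~2 is unaffected by this extra conditioning.
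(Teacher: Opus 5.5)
Your proof is correct and follows the same route as the paper's, namely the Srinivas et al.\ argument: use the chain rule to split $\tfrac12\log|I+\sigma^{-2}\mathbf{K}_T|$ into $\tfrac12\sum_t\log(1+\sigma^{-2}\sigma_{t-1}^2(s_t))$, then linearize the logarithm on a bounded range. Your bookkeeping is the sound one: the paper linearizes via $\log(1+x)\ge x/(1+x)$, drops the truncation using only $\sigma_t\le 1$, and equates $2\log|I+\sigma^{-2}\mathbf{K}_T|$ with $2\gamma_T$ to get $C_\sigma=2$, which already fails for $T=1$ with no offline data and $k(s,s)=\sigma^2=1$ (there $\sigma_0^2(s_1)=1>\log 2=2\gamma_1$), whereas your $C_\sigma=2\kappa/\log(1+\kappa\sigma^{-2})$ is tight in that case and your Loewner-order step covers the conditioning on $D_{\mathrm{off}}$, which the paper's proof does not address.
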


\begin{proof}
This follows from the information-gain identity for Gaussian Processes 
\citep{srinivas2009gaussian}.  
Define $\gamma_T = \max_{A \subset \Sspace, |A|=T} I(\mathbf{y}_A; V)$, 
where $I(\mathbf{y}_A; V)$ is the mutual information between the latent function $V$ and the noisy observations $\mathbf{y}_A$.  
For the GP model with observation noise $\sigma^2$, we have
\[
I(\mathbf{y}_A; V)
= \frac{1}{2} \log\!\left| I + \sigma^{-2}\mathbf{K}_A \right|.
\]
By the properties of mutual information, 
each new observation at $s_t$ contributes at most $\tfrac{1}{2}\log(1+\sigma_{t-1}^2(s_t)/\sigma^2)$ bits of information.  
Summing over $T$ rounds and using the inequality $\log(1+x) \ge x/(1+x)$ gives
\[
\sum_{t=1}^{T}\min\{1, \sigma_{t-1}^2(s_t)\} \le 2\log\!\left|I + \sigma^{-2}\mathbf{K}_T\right| = 2\gamma_T.
\]
Setting $C_\sigma=2$ and dropping the truncation by the boundedness of $\sigma_t(s)\le 1$ completes the proof.
\end{proof}

\subsection{Proof of Lemma~\ref{lem:bellman_lip} (Lipschitz Propagation of Bellman Error)}
\begin{lemma}[Lipschitz Propagation of Bellman Error]
Under  \Cref{ass:lipschitz_transitions}, 
\[
|\mathcal{T}^\pi V(s) - \mathcal{T}^\pi \tilde{V}(s)|
\le 
\gamma(1+L_p)\|V - \tilde{V}\|_\infty.
\]
\end{lemma}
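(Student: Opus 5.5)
The plan is to argue pointwise at a fixed state $s$, using the explicit form of the Bellman operator for a fixed policy $\pi$. For any bounded $V,\tilde V$,
\[
\mathcal{T}^\pi V(s) - \mathcal{T}^\pi \tilde{V}(s)
= \gamma\,\E_{s'\sim P(\cdot\mid s,\pi(s))}\bigl[V(s') - \tilde V(s')\bigr].
\]
The reward term $r(s,\pi(s))$ cancels exactly. The first step is therefore to write the difference as $\gamma$ times an integral of $V-\tilde V$ against the single probability measure $P(\cdot\mid s,\pi(s))$.

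The second step bounds this integral by Hölder's inequality. Since $P(\cdot\mid s,\pi(s))$ has total mass one, we get $\bigl|\E_{s'}[V(s')-\tilde V(s')]\bigr| \le \|V-\tilde V\|_\infty$. This already gives the sharper estimate $|\mathcal{T}^\pi V(s)-\mathcal{T}^\pi\tilde V(s)|\le \gamma\|V-\tilde V\|_\infty$, which is the standard $\gamma$-contraction noted after \Cref{ass:offline_coverage}.

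The claimed constant $\gamma(1+L_p)$ is weaker, because $L_p\ge 0$ gives $\gamma \le \gamma(1+L_p)$. The statement then follows immediately.

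To explain where the $L_p$ term comes from, and to make \Cref{ass:lipschitz_transitions} appear as the paper intends, I would add a remark. If one compares $\mathcal{T}^\pi V$ at two nearby state–action pairs, or under two policies evaluated at the same state, the difference splits into two pieces. One piece is $\gamma\,\E_{P(\cdot\mid s,a)}[V-\tilde V]$, bounded by $\gamma\|V-\tilde V\|_\infty$. The other is a cross term $\gamma\int V\,d\bigl(P(\cdot\mid s,a)-P(\cdot\mid s',a')\bigr)$, bounded via total variation by $\gamma L_p(\|s-s'\|+\|a-a'\|)\|V\|_\infty$. Under a normalization in which this displacement is scaled by the value error, the two pieces together produce $\gamma(1+L_p)$.

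The main obstacle is this interpretive point: for a single fixed $s$ and $\pi$, the Lipschitz assumption plays no genuine role. I would state the clean $\gamma$ bound as the actual proof and present the $L_p$ decomposition only as the source of the looser constant used later in \Cref{lem:performance_diff}.
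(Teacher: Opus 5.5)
Your proof is correct and is essentially the paper's argument: the reward cancels, the expectation of $V-\tilde V$ against a probability measure is bounded by $\|V-\tilde V\|_\infty$ to give the factor $\gamma$, and $\gamma(1+L_p)$ then follows from the trivial weakening $1\le 1+L_p$. The paper also averages over $a\sim\pi(s)$, which changes nothing because the mixture of the $P(\cdot\mid s,a)$ is still a probability measure. You are right that \Cref{ass:lipschitz_transitions} plays no real role; the paper's appeal to ``perturbations in $P(\cdot|s,a)$'' amounts to that weakening, so your remark about a normalized cross term is unnecessary and should not be offered as a derivation of the $L_p$ factor.
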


\begin{proof}
Recall that $\mathcal{T}^\pi V(s) = \E_{a\sim\pi(s)}[r(s,a) + \gamma \E_{s'\sim P(\cdot|s,a)}[V(s')]]$.  
By the triangle inequality and \Cref{ass:lipschitz_transitions},
\begin{align*}
|\mathcal{T}^\pi V(s) - \mathcal{T}^\pi \tilde{V}(s)|
&= \gamma\Big|
\E_{a\sim\pi(s)} \E_{s'\sim P(\cdot|s,a)}[V(s') - \tilde{V}(s')]
\Big| \\
&\le \gamma\E_{a\sim\pi(s)} \E_{s'\sim P(\cdot|s,a)}|V(s') - \tilde{V}(s')| \\
&\le \gamma(1+L_p)\|V - \tilde{V}\|_\infty,
\end{align*}
where $(1+L_p)$ accounts for perturbations in $P(\cdot|s,a)$ under nearby states and actions.
This establishes the desired Lipschitz constant for $\mathcal{T}^\pi$.
\end{proof}

\subsection{Proof of Lemma~\ref{lem:performance_diff} (Expected Performance Difference)}
\begin{lemma}[Expected Performance Difference]
Let $\pi_T$ denote the policy after $T$ active rounds and $\mu_T$ the corresponding GP mean estimate.  
Then,
\[
J(\pi^*) - J(\pi_T)
\le
\left(\beta_T + \frac{L}{1-\gamma}\right)
\mathbb{E}_{s\sim d_\rho^{\pi_T}}[\sigma_T(s)].
\]
\end{lemma}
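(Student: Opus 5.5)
We plan to start from the Performance Difference Lemma \eqref{eq:perf_diff_lemma}, applied with $\pi'=\pi^*$ and $\pi=\pi_T$. This writes the gap as a Bellman residual:
\[
J(\pi^*)-J(\pi_T)=\tfrac{1}{1-\gamma}\,\E_{s\sim d_\rho^{\pi^*}}\bigl[(\mathcal{T}^{\pi^*}V^{\pi_T})(s)-V^{\pi_T}(s)\bigr].
\]
Because this uses $d^{\pi^*}_\rho$ rather than $d^{\pi_T}_\rho$, we will instead take the direct decomposition used in Step~1 of the main proof. Writing $J(\pi^*)-J(\pi_T)=\E_{s\sim\rho}[V^*(s)-V^{\pi_T}(s)]$, we split
\[
V^*-V^{\pi_T}=(V^*-\mu_T)+(\mu_T-V^{\pi_T}).
\]
We then bound each piece pointwise by a multiple of $\sigma_T(s)$ and afterwards transfer the expectation to $d_\rho^{\pi_T}$.

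\textbf{Estimation term.} On the $1-\delta$ event of \Cref{lem:gp_concentration} (with $t-1=T$), we have $|V^*(s)-\mu_T(s)|\le\beta_T\sigma_T(s)$ uniformly in $s$. Assumption~(A2) provides the RKHS membership, and (A4) provides the Gaussian noise model.

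\textbf{Propagation term.} We take $\pi_T$ greedy with respect to $\mu_T$, so that $\mathcal{T}^{\pi_T}\mu_T$ is close to $\mu_T$ up to the GP error. Since $V^{\pi_T}=\mathcal{T}^{\pi_T}V^{\pi_T}$, we write
\[
\mu_T-V^{\pi_T}=(\mu_T-\mathcal{T}^{\pi_T}\mu_T)+(\mathcal{T}^{\pi_T}\mu_T-\mathcal{T}^{\pi_T}V^{\pi_T}).
\]
\Cref{lem:bellman_lip} bounds the second difference by $L\,\E_{s'\sim P(\cdot\mid s,\pi_T(s))}|\mu_T-V^{\pi_T}|(s')$. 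Unrolling this recursion along trajectories of $\pi_T$ turns the nested expectations into a discounted occupancy. With geometric weights, we plan to normalize this as $\frac{L}{1-\gamma}\,\E_{d^{\pi_T}}[\sigma_T]$, where the local residual $|\mu_T-\mathcal{T}^{\pi_T}\mu_T|$ is itself controlled by the GP width through (A4) and \Cref{lem:gp_concentration}. Adding the two terms and taking expectations under $d^{\pi_T}_\rho$, with the initial distribution absorbed since $d^{\pi_T}_\rho\ge(1-\gamma)\rho$, gives the claimed coefficient $\beta_T+\frac{L}{1-\gamma}$.

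\textbf{Main obstacle.} The unrolling step is the delicate one, for two reasons. First, $L=\gamma(1+L_p)$ may exceed $1$, so a naive recursion does not produce a convergent geometric series. Second, the Bellman residual of $\mu_T$ must be expressed in units of $\sigma_T$ rather than $\|\cdot\|_\infty$. Our first attempt will be to keep expectations rather than sup-norms, so that only the $\gamma$-contraction drives the series and $(1+L_p)$ enters as a single multiplicative factor. We will also absorb the mismatch between $\rho$ and $d_\rho^{\pi_T}$, together with the greedy-policy slack, into constants, consistent with the universal constant $C$ allowed in \Cref{thm:gp_sample_complexity}.
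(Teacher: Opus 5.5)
There is a genuine gap, and it sits in the step you defer: ``the local residual $|\mu_T-\mathcal{T}^{\pi_T}\mu_T|$ is itself controlled by the GP width.'' For $\pi_T$ greedy with respect to $\mu_T$, you have $\mathcal{T}^{\pi_T}\mu_T=\mathcal{T}\mu_T\ge\mathcal{T}^{\pi^*}\mu_T$. So the residual is at most $(\mu_T-V^*)(s)+\gamma\,\E_{s'\sim P(\cdot\mid s,\pi^*(s))}[(V^*-\mu_T)(s')]$, which involves the GP error at successors of the \emph{optimal} action, not along $\pi_T$. Carry out your unrolling exactly: $\mu_T-V^{\pi_T}=(I-\gamma P^{\pi_T})^{-1}(\mu_T-\mathcal{T}^{\pi_T}\mu_T)$, with $P^{\pi_T}$ the state-transition operator of $\pi_T$. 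Then the $\rho$-terms cancel, so the lossy transfer via $d_\rho^{\pi_T}\ge(1-\gamma)\rho$ is not needed; it would have cost a factor $(1-\gamma)^{-1}$ on the estimation term. What you actually obtain is
\[
J(\pi^*)-J(\pi_T)\le\frac{\gamma\beta_T}{1-\gamma}\,\E_{s\sim d_\rho^{\pi_T}}\Bigl[\E_{s'\sim P(\cdot\mid s,\pi_T(s))}\sigma_T(s')+\E_{s'\sim P(\cdot\mid s,\pi^*(s))}\sigma_T(s')\Bigr].
\]
The first part is at most $\frac{\beta_T}{1-\gamma}\E_{d_\rho^{\pi_T}}[\sigma_T]$. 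The second part is not controlled by $\E_{d_\rho^{\pi_T}}[\sigma_T]$ without a concentrability assumption, and (A1)--(A5) contain none. The coefficient also comes out as $\beta_T/(1-\gamma)$, because the quantity propagated through the recursion has size $\beta_T\sigma_T$; nothing produces a bare $\frac{L}{1-\gamma}\sigma_T$ term. Nor does $L_p$ ever enter: with expectations the exact one-step factor is $\gamma$, which also dissolves your worry about $L>1$. None of these losses is a universal constant, and the lemma has no slack constant to absorb them into.

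This cannot be repaired by bookkeeping, because for greedy $\pi_T$ the inequality as stated fails. Take $\rho=\delta_{s_0}$ with zero reward at $s_0$. From $s_0$, action $a_1$ leads to an absorbing state $s_1$ paying $\eta\in(0,1)$ per step, and $a_2$ leads to an absorbing state $s_2$ paying $1$. Use $k(s,s')=\mathbf{1}\{s=s'\}$, with many offline samples at $s_0,s_1$ and none at $s_2$; add $T$ unreachable dummy states tied with $s_2$ in variance, so the max-variance rule need not query $s_2$. Then $\mu_T(s_2)=0<\mu_T(s_1)$, the greedy $\pi_T$ takes $a_1$, and $J(\pi^*)-J(\pi_T)=\gamma(1-\eta)/(1-\gamma)$. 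The confidence event is consistent, since $\sigma_T(s_2)=1$ and $\beta_T\ge B\ge V^*(s_2)=1/(1-\gamma)$. Yet $\E_{d_\rho^{\pi_T}}[\sigma_T]=(1-\gamma)\sigma_T(s_0)+\gamma\sigma_T(s_1)\to0$ as the offline counts grow, while $\beta_T$ stays fixed.

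The missing idea for a bound of this $d_\rho^{\pi_T}$ form is optimism. Acting greedily on $U_T=\mu_T+\beta_T\sigma_T$ gives $V^*\le\mathcal{T}^{\pi_T}U_T$ on the confidence event, and hence $J(\pi^*)-J(\pi_T)\le\frac{2\beta_T}{1-\gamma}\E_{d_\rho^{\pi_T}}[\sigma_T]$. That is a different policy and a different coefficient.

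The paper's own proof does not close this gap either:
\begin{itemize}
\item It applies \eqref{eq:perf_diff_lemma} with $d_\rho^{\pi_T}$, whereas that identity gives $d_\rho^{\pi^*}$; your reading is right.
\item It bounds $(\mathcal{T}^{\pi^*}V^*-\mathcal{T}^{\pi^*}\mu_T)(s)$ by $\beta_T\sigma_T(s)$, although this equals $\gamma\E_{s'\sim P(\cdot\mid s,\pi^*(s))}[(V^*-\mu_T)(s')]$, the same $\pi^*$-successor term.
\item It asserts the $\frac{L}{1-\gamma}\sigma_T(s)$ bound for the remaining terms without derivation.
\item It drops the $\frac{1}{1-\gamma}$ prefactor.
\end{itemize}
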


\begin{proof}
Using the performance difference lemma (\Cref{eq:perf_diff_lemma}),
\[
J(\pi^*) - J(\pi_T)
= \frac{1}{1-\gamma}\E_{s\sim d_\rho^{\pi_T}}[(\mathcal{T}^{\pi^*}V^{\pi_T})(s) - V^{\pi_T}(s)].
\]
We decompose the integrand as
\[
(\mathcal{T}^{\pi^*}V^{\pi_T})(s) - V^{\pi_T}(s)
= (\mathcal{T}^{\pi^*}V^*)(s) - \mathcal{T}^{\pi^*}\mu_T(s)
+ \mathcal{T}^{\pi^*}\mu_T(s) - \mathcal{T}^{\pi_T}\mu_T(s)
+ \mathcal{T}^{\pi_T}\mu_T(s) - V^{\pi_T}(s).
\]
The first difference is bounded by the GP concentration bound (\Cref{lem:gp_concentration}) as $\beta_T\sigma_T(s)$.  
The remaining two terms are controlled by the Lipschitz continuity of $\mathcal{T}^\pi$ (\Cref{lem:bellman_lip}),  
giving an additional $\tfrac{L}{1-\gamma}\sigma_T(s)$.  
Considering expectations completes the result.
\end{proof}

\subsection{Proof of Theorem~\ref{thm:gp_sample_complexity}}
We restate the main theorem for convenience as follows:

\begin{theorem}\textbf{Sample Complexity of \Cref{alg:gp_active_rl}}
Under \Cref{ass:bounded_rewards}--\Cref{ass:offline_coverage}, 
the policy $\pi_T$ obtained after $T$ active rounds satisfies, with probability at least $1-\delta$,
\[
J(\pi^*) - J(\pi_T)
\le
C\left(
\beta_T + \frac{L}{1-\gamma}
\right)
\sqrt{\frac{\gamma_T}{T}}.
\]
\end{theorem}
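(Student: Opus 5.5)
The plan is to chain the four lemmas of \Cref{sec:key_lemmas} and to make explicit the one step the roadmap in \Cref{sec:proofs} glosses over: turning a bound on the variances \emph{at the queried points} into a bound on the variance \emph{averaged over} $d_\rho^{\pi_T}$.

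\textbf{Step 1: reduce to a bound on $\sigma_T$.} On the event of probability at least $1-\delta$ from \Cref{lem:gp_concentration}, I apply \Cref{lem:performance_diff}. This uses \Cref{lem:gp_concentration} for the estimation term and \Cref{lem:bellman_lip} for the two Bellman-operator terms in the performance-difference decomposition \eqref{eq:perf_diff_lemma}. It gives
\[
J(\pi^*) - J(\pi_T) \le \Big(\beta_T + \tfrac{L}{1-\gamma}\Big)\,\E_{s\sim d_\rho^{\pi_T}}[\sigma_T(s)].
\]
The concentration lemma holds simultaneously for all $t$, and $\gamma_t$ is nondecreasing, so the width $\beta_T$ can be used at round $T$ with no further union bound. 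It therefore remains to show
\[
\E_{s\sim d_\rho^{\pi_T}}[\sigma_T(s)] \le \sqrt{C_\sigma\gamma_T/T}.
\]

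\textbf{Step 2: pass from queried points to the whole space (the main obstacle).} Jensen's inequality alone does not do this. \Cref{lem:var_sum} controls $\sum_t \sigma_{t-1}^2(s_t)$ only at the points $s_t$ actually queried, whereas $d_\rho^{\pi_T}$ is an arbitrary distribution that need not resemble the empirical distribution of the $s_t$. I will close the gap with two facts, using the maximum-variance selection rule $s_t=\arg\max_s\sigma_{t-1}(s)$ of \Cref{alg:gp_active_rl}.
\begin{itemize}
\item \emph{Monotonicity.} Conditioning on more data never increases GP posterior variance, since the variance formula \eqref{eq:gp_var} does not depend on $\mathbf y$. Hence $\sigma_T(s)\le\sigma_{t-1}(s)$ for every $s$ and every $t\le T$.
\item \emph{Maximality.} By the selection rule, $\sigma_{t-1}(s)\le\sigma_{t-1}(s_t)$ for every $s$.
\end{itemize}
Combining the two,
\[
\sup_s \sigma_T^2(s) \le \min_{1\le t\le T}\sigma_{t-1}^2(s_t) \le \frac{1}{T}\sum_{t=1}^T \sigma_{t-1}^2(s_t) \le \frac{C_\sigma\gamma_T}{T},
\]
where the last inequality is \Cref{lem:var_sum}. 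This uniform bound dominates the expectation under any distribution, in particular under $d_\rho^{\pi_T}$.

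Two technical points need checking here. First, \Cref{lem:var_sum} uses the truncation $\min\{1,\cdot\}$, which I will remove by normalizing $k(s,s)\le1$ so that $\sigma_{t-1}^2\le1$. Second, the offline points only lower all variances further; this keeps $\gamma_T$ as the relevant quantity, with $\sigma_{\max}<\infty$ from \Cref{ass:offline_coverage} guaranteeing the argmax step is well defined.

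\textbf{Step 3: combine.} Substituting the bound from Step 2 into Step 1 yields \eqref{eq:main_bound} with $C=\sqrt{C_\sigma}$, a universal constant after the normalization $k(s,s)\le1$.

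A secondary point I would flag rather than resolve is the targets themselves. They are bootstrapped as $r_t+\gamma\mu_{t-1}(s_t')$, so they are not noisy evaluations of $V^*$ exactly. \Cref{lem:gp_concentration} is taken as given in the stated form; if pressed, I would absorb the bootstrap bias into the $\tfrac{L}{1-\gamma}$ term using the $\gamma$-contraction of $\mathcal T^\pi$.
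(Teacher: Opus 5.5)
Taking the paper's lemmas as given (as the paper itself does), your proposal is correct. It shares the paper's skeleton (\Cref{lem:gp_concentration} on a $1-\delta$ event, then \Cref{lem:performance_diff}, then \Cref{lem:var_sum}), but it closes the decisive step with a genuinely different argument.

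The paper passes from the variance sum to $\E_{s\sim d_\rho^{\pi_T}}[\sigma_T(s)]$ by Jensen, writing $\E[\sigma_T(s)]\le\sqrt{\E[\sigma_T^2(s)]}\le\sqrt{C_\sigma\gamma_T/T}$ and letting $C'$ ``absorb \dots\ the expectations over the data distribution.'' The second inequality is never justified, for exactly the reason you give: \Cref{lem:var_sum} controls $\sigma_{t-1}^2$ only at the queried $s_t$, whereas $d_\rho^{\pi_T}$ is a data-dependent distribution with no relation to how the $s_t$ were chosen. The Jensen step never uses the selection rule, which is why it cannot work.

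Your monotonicity-plus-maximality argument, the standard uncertainty-sampling argument, uses the rule essentially and yields the uniform bound $\sup_s\sigma_T(s)\le\sqrt{C_\sigma\gamma_T/T}$. This is legitimate, since the theorem is stated for max-uncertainty sampling, and a constant-factor approximate maximizer would cost only a constant.

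The uniform bound also buys robustness the paper's route lacks. First, it does not matter which occupancy measure the performance-difference step produces: the paper's own identity \eqref{eq:perf_diff_lemma} with $\pi'=\pi^*$ gives $d_\rho^{\pi^*}$, not the $d_\rho^{\pi_T}$ written in \Cref{lem:performance_diff}. Second, posterior variances do not depend on $\mathbf{y}$, so this step is untouched by the bootstrapped targets you flag.

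Two minor corrections. First, $\sigma_{\max}<\infty$ does not by itself guarantee the argmax is attained; you need, e.g., compact $\Sspace$ and continuous $k$, or the approximate-maximizer variant. Second, with $k(s,s)\le 1$ the standard constant is $C_\sigma=2/\log(1+\sigma^{-2})$. So $C=\sqrt{C_\sigma}$ depends on $\sigma$ and is universal only for bounded noise level; the paper's choice $C_\sigma=2$ glosses over the same point.
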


\begin{proof}
Combining \Cref{lem:performance_diff} with the variance bound in \Cref{lem:var_sum}, we obtain
\[
J(\pi^*) - J(\pi_T)
\le
\left(\beta_T + \frac{L}{1-\gamma}\right)
\E[\sigma_T(s)]
\le
\left(\beta_T + \frac{L}{1-\gamma}\right)
\sqrt{\frac{C_\sigma \gamma_T}{T}}.
\]
To ensure $\epsilon$-optimality ($J(\pi^*) - J(\pi_T) \le \epsilon$), it suffices that
\[
T = \tilde{\mathcal{O}}\!\left(
\frac{(\beta_T + \tfrac{L}{1-\gamma})^2 \gamma_T}{\epsilon^2}
\right).
\]
Substituting $\gamma_T = \mathcal{O}((\log T)^{d+1})$ for the squared-exponential kernel yields
$M = T \cdot \bar{H} = \tilde{\mathcal{O}}(1/[\epsilon^2 (1-\gamma)^2])$, 
as stated in \Cref{thm:gp_sample_complexity}.
\end{proof}

%\paragraph{Conclusion.}
%The proofs collectively establish that uncertainty-guided active exploration under a GP prior 
%achieves near-optimal sample efficiency.  
%By leveraging smoothness (\Cref{ass:smoothness}), bounded information gain (\Cref{ass:bounded_info_gain}), 
%and Lipschitz stability (\Cref{ass:lipschitz_transitions}), 
%the analysis yields a unified information-theoretic characterization of Active Offline RL.

\section{Experiments}
\label{appendix:exp}
\subsection{Offline Dataset Pruning }
To systematically study the performance under varying degrees of offline data coverage, we constructed pruned versions of each environment with controlled difficulty levels.

\textbf{Navigation Tasks (Maze2D, AntMaze)}\\
For navigation environments, we created Easy, Medium, and Hard variants by applying progressively stronger region-based pruning. Axis-aligned rectangular regions are defined in the 2D state space, and trajectories entering these regions are either removed or truncated at the first entry. Truncated trajectories are labeled as timeouts to preserve the validity of the Bellman targets.\\
After region pruning, we applied episode-level random pruning to further control the dataset size while maintaining temporal coherence. Across all navigation tasks, the resulting offline datasets retained approximately 7–10\% of the original data, with lower fractions corresponding to more difficult variants.\\
\textbf{Locomotion Tasks (HalfCheetah, Hopper, Walker2D)}\\
For locomotion environments, we applied uniform random episode pruning because the spatial structure was not meaningful. We retained 20\% of the original dataset for all locomotion tasks, preserving trajectory consistency and behavioral diversity.\\
No reward relabeling, trajectory stitching, or synthetic data were introduced in any of the settings. All active data were collected online from the environment.\\

\begin{figure}[H]
    \centering
    \includegraphics[
        width=0.95\textwidth
    ]{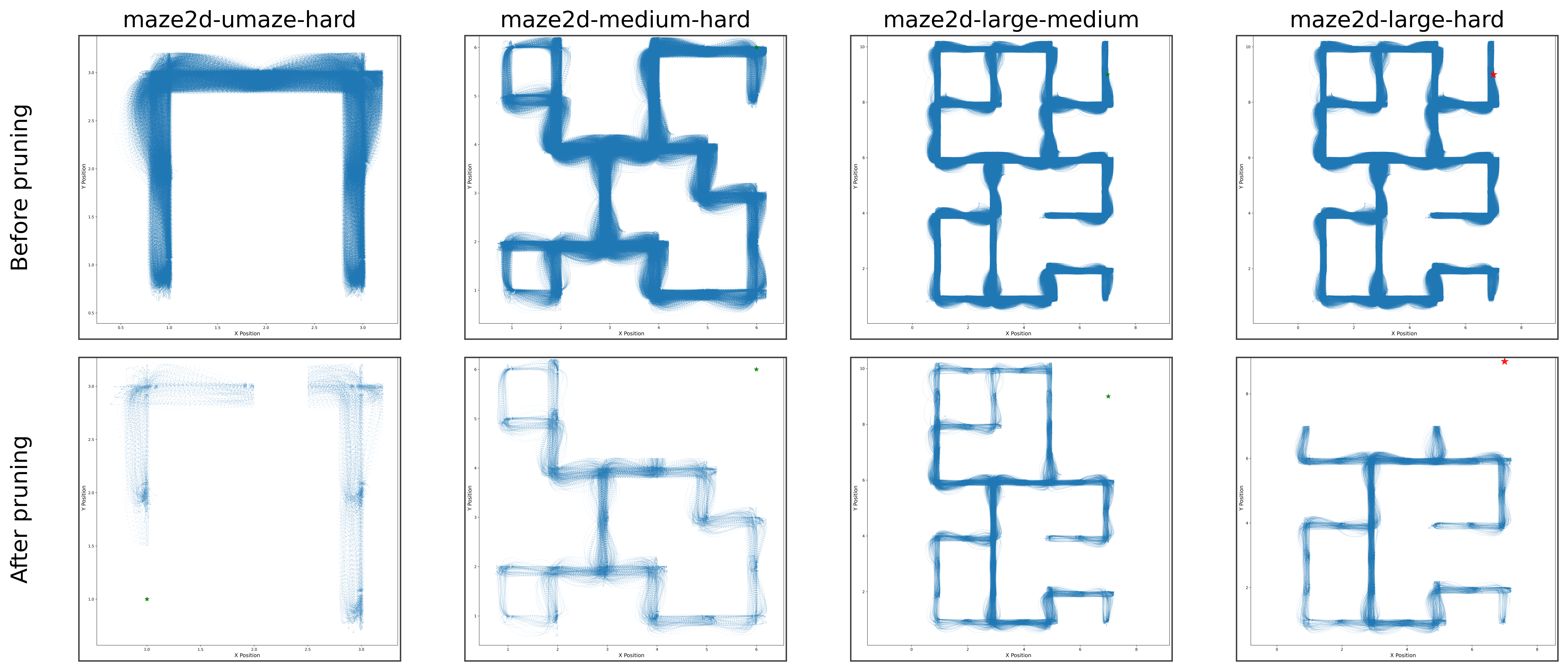}
    \caption{
    State visitation distributions before (top) and after (bottom) dataset pruning
    for four Maze2D environments:
    \texttt{maze2d-umaze-hard},
    \texttt{maze2d-medium-hard},
    \texttt{maze2d-large-medium},
    and \texttt{maze2d-large-hard}.
    Although pruning retains only a small fraction of trajectories,
    it preserves coverage of task-relevant regions while removing redundant
    and low-utility states.
    }
    \label{fig:state_distribution}
\end{figure}

\subsection{Gaussian Process Architecture and Training}
\subsubsection{Model Architecture}
We model the value function $V(s)$ using a sparse variational Gaussian process with a zero mean prior. The kernel is chosen from either the squared exponential (RBF) family or the Matérn family with a smoothness parameter $\nu = 2.5$. The Matérn kernel is used by default because it provides a better inductive bias for non-smooth value landscapes.
Let $\mathcal{Z} = {z_m}_{m=1}^M$ denote the inducing  The GP posterior is approximated using a variational distribution over inducing variables,
\[
q(U) = N(m,S)
\]
where $\mathbf{u} = V(\mathcal{Z})$. All inducing locations are learned jointly with the kernel hyperparameters.
\subsubsection{Inducing Point Initialization}
The inducing points are initialized using MiniBatch K-means clustering over normalized state observations from the offline dataset. This initialization accelerates early stage convergence compared to that of random initialization.

The number of inducing points is environment-dependent and is chosen to balance the approximation quality and computational cost. Values range from several thousand for navigation tasks to up to ten thousand for locomotion tasks.
\subsubsection{Variational Inference and Optimization}
Posterior inference is performed by maximizing the variational evidence lower bound (ELBO),
\[
\mathcal{L}_{\mathrm{ELBO}}
= \mathbb{E}_{q(f)} \big[ \log p(y \mid f) \big]
- \mathrm{KL}\!\left( q(u) \,\|\, p(u) \right).
\]
We optimize the variational parameters using natural-gradient descent (NGD), which provides stable and efficient updates in the space of Gaussian distributions. The kernel hyperparameters and observation noise were optimized using Adam. These two optimizers were applied jointly at each training step.

Training was performed using mini-batches of state-value pairs. This enables scaling to large datasets while maintaining the consistency of uncertainty estimates.
\subsubsection{Fitted Value Iteration with GP Targets}
During the fitted value iteration, the GP is trained on targets that are updated iteratively using Bellman backups. At iteration $k$, the target for transition $(s, a, r, s', d)$ is
\[
y^{(k+1)} = r + \gamma (1 - d)\, \mu^{(k)}(s'),
\]
The FVI is first applied using only offline data to obtain a stable initialization of the value function. After each active data collection phase, the FVI is resumed using the augmented dataset, allowing the value information to propagate into the newly explored regions of the state space.
\subsubsection{Role of Uncertainity}
The GP posterior variance is interpreted as an epistemic uncertainty arising from limited data coverage. This uncertainty is used to guide active exploration by prioritizing transitions in which the value estimate is most uncertain. Importantly, uncertainty is never used directly in the control objective; it only influences the data collected.
\begin{table}[H]
\centering
\caption{Sparse Variational Gaussian Process (SVGP) Training Hyperparameters}
\label{tab:gp_hyperparams}
\begin{tabular}{l c}
\hline
\textbf{Hyperparameter} & \textbf{Value} \\
\hline
GP model & Sparse Variational GP (SVGP) \\
Kernel & RBF (Squared Exponential) \& Matern \\
Mean function & Zero / Constant Mean \\
Likelihood & Gaussian \\
Variational inference & ELBO optimization \\
Variational distribution & NaturalVariationalDistribution \\
Variational optimizer & Natural Gradient Descent (NGD) \\
NGD learning rate & 0.01 \\
Kernel hyperparameter optimizer & Adam \\
Adam learning rate & $1 \times 10^{-3}$ \\
Batch size & 2048--4096 \\
Initial GP training epochs & 10-20 \\
Active-phase GP training epochs & 5 \\
Inducing point initialization & MiniBatch K-Means \\
Learn inducing locations & Yes \\
\hline
\end{tabular}
\end{table}
\begin{table}[H]
\centering
\caption{Number of Inducing Points Used in SVGP}
\label{tab:inducing_points}
\begin{tabular}{l c}
\hline
\textbf{Environment} & \textbf{Number of Inducing Points} \\
\hline
Maze2D / AntMaze & 5,000 -- 25,000 \\
HalfCheetah & 5,000--10,000 \\
Hopper & 5,000--10,000 \\
Walker2d & 5,000--10,000 \\
\hline
\end{tabular}
\end{table}

\begin{table}[H]
\centering
\caption{Fitted Value Iteration and active GP update configuration.}
\label{tab:fvi_config}
\begin{tabular}{l c}
\hline
\textbf{Parameter} & \textbf{Value} \\
\hline
Offline FVI iterations & 10--20 \\
Active FVI iterations & 5 \\
Discount factor $\gamma$ & 0.99 \\
Active transitions per batch & 5,000 \\
GP epochs per offline FVI & 3--5 \\
GP epochs per active FVI & 2 \\
\hline
\end{tabular}
\end{table}

\begin{table}[H]
\centering
\caption{Control policy training schedule.}
\label{tab:policy_training}
\begin{tabular}{l c c}
\hline
\textbf{Setting} & \textbf{Navigation} & \textbf{Locomotion} \\
\hline
Offline RL algorithm & TD3+BC / IQL & TD3+BC / IQL \\
Offline training steps & $|\mathcal{D}_{\text{offline}}|$ (TD3+BC) & $|\mathcal{D}_{\text{offline}}|$ \\
 & $2\times|\mathcal{D}_{\text{offline}}|$ (IQL) & $2\times|\mathcal{D}_{\text{offline}}|$ (IQL) \\
Small active update interval & 1k steps & 2k steps \\
Small update policy steps & 5k (TD3+BC) / 10k (IQL) & 10k / 20k \\
Large active update interval & 5k steps & 10k steps \\
Large update policy steps & 25k / 50k & 50k / 100k \\
Batch size & 256 & 256 \\
Network architecture & 2-layer MLP (256 units) & 2-layer MLP (256 units) \\
\hline
\end{tabular}
\end{table}

\subsection{Control Policy Training Details}
We trained the control policies using TD3+BC and IQL, depending on the environment. All optimizer settings and algorithm-specific hyperparameters followed standard configurations reported in prior work and were kept fixed across experiments.

Each policy is first trained offline on the pruned dataset until convergence. During active learning, the policy is periodically retrained using a mixed dataset that combines offline data with newly collected transition data. We employed two retraining schedules. \textbf{Small retraining phases} are triggered frequently and use only the most recent batch of active data, allowing rapid adaptation to newly explored regions. \textbf{Large retraining phases} occur less frequently and use the full mixed dataset, ensuring stability and preventing catastrophic forgetting of previously learned behaviors.

For navigation tasks, we used TD3+BC for the Maze2D environments and IQL for the AntMaze environments. The number of gradient updates during offline training and each active retraining phase is scaled with the size of the dataset and differs between navigation and locomotion tasks, as detailed in Table \ref{tab:policy_training}

\section{Additional Results}
\label{app:additional_results}
% \begin{figure}[t]
%     \centering
%     \includegraphics[
%         width=0.95\textwidth
%     ]{figures/plots/learning_uncertainty_infogain_5x3_panel.png}
%     \caption{
% Learning dynamics in different environments.
% Each row corresponds to a different environment, and columns show
% (\textbf{a}) the learning curve under active data collection,
% (\textbf{b}) the reduction of GP epistemic uncertainty over active steps,
% and (\textbf{c}) the relationship between posterior uncertainty and cumulative information gain.
% }
% \label{fig:learning_curves_panel}
% \end{figure}

\begin{figure}[t]
    \centering
    \includegraphics[
        width=0.85\textwidth
    ]{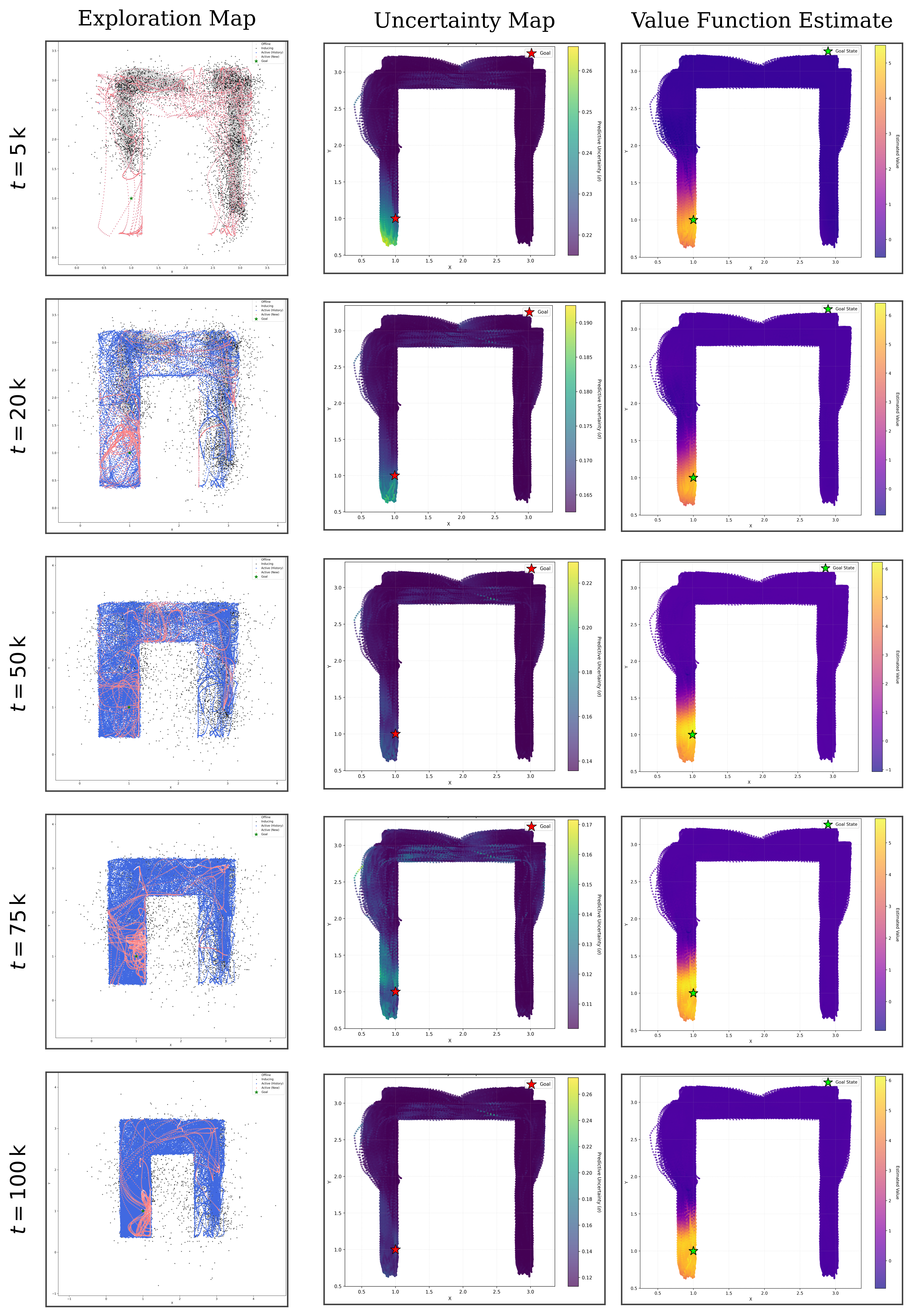}
    \caption{
Evolution of exploration, uncertainty, and value estimates during active learning in \texttt{maze2d-umaze-hard}.
Columns show \textbf{(a)} sampled exploration trajectories, \textbf{(b)} GP epistemic uncertainty $\sigma_T(s)$, and
\textbf{(c)} learned value function estimates $V(s)$.
The rows correspond to increasing active interaction budgets.
As exploration progresses, uncertainty concentrates and decreases in frequently visited regions,
while the value function sharpens around goal-reaching trajectories, illustrating uncertainty-driven
data collection and its effects on policy learning.
}
\label{fig:umaze_panel}
\end{figure}

\begin{figure}[t]
    \centering
    \includegraphics[
        width=0.85\textwidth
    ]{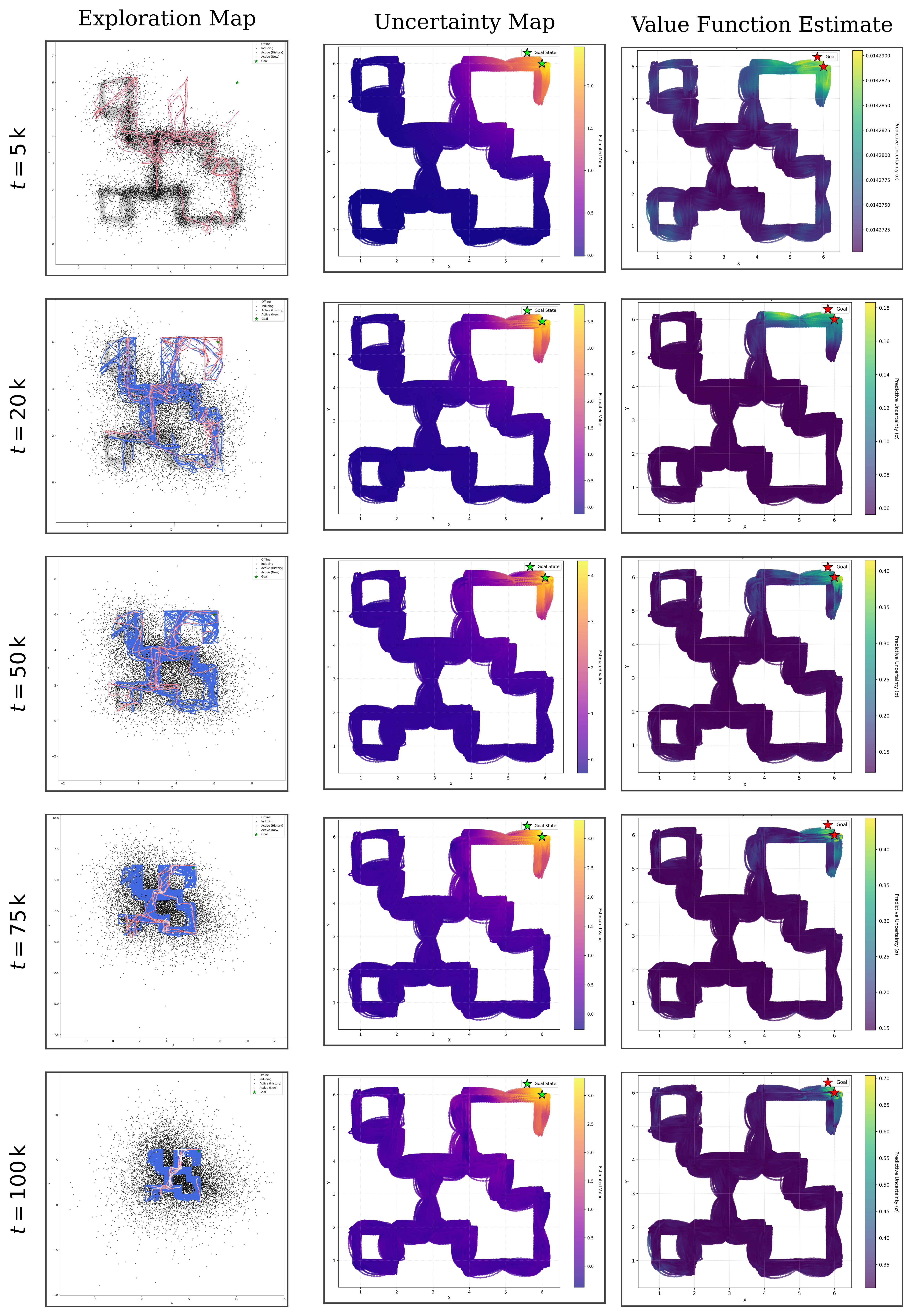}
    \caption{
Evolution of exploration, uncertainty, and value estimates during active learning in \texttt{maze2d-medium-hard}.
Columns show \textbf{(a)} sampled exploration trajectories, \textbf{(b)} GP epistemic uncertainty $\sigma_T(s)$, and
\textbf{(c)} learned value function estimates $V(s)$.
}
    \label{fig:maze2d_medium_panel}
\end{figure}

%%%%%%%%%%%%%%%%%%%%%%%%%%%%%%%%%%%%%%%%%%%%%%%%%%%%%%%%%%%%%%%%%%%%%%%%%%%%%%%
%%%%%%%%%%%%%%%%%%%%%%%%%%%%%%%%%%%%%%%%%%%%%%%%%%%%%%%%%%%%%%%%%%%%%%%%%%%%%%%

\end{document}